\DeclareMathOperator*{\argmin}{\arg\!\min}
\DeclareMathOperator*{\argmax}{\arg\!\max}
\newtheorem{theorem}{Theorem}
\newtheorem{corollary}[theorem]{Corollary}
\newcolumntype{Y}{>{\centering\arraybackslash}X}
\newcounter{appsection}  
\renewcommand{\theappsection}{\Alph{appsection}}  
\newcounter{appsubsection}[appsection]  
\renewcommand{\theappsubsection}{\theappsection.\arabic{appsubsection}}
\newcommand{\appsectiontitle}[1]{%
    \refstepcounter{appsection} 
    \noindent\textbf{\Large Appendix \theappsection. \; #1}  
    \vspace{0.5em}  
}
\newcommand{\appsubsectiontitle}[1]{%
    \refstepcounter{appsubsection} 
    \noindent\textbf{\large Appendix \theappsubsection. \; #1}  
    \vspace{0.5em}  
    \noindent  
}
\title{OneBatchPAM: A Fast and Frugal K-Medoids Algorithm}
\author{
    Antoine de Mathelin\textsuperscript{\rm 1},
    Nicolas Enrique Cecchi\textsuperscript{\rm 1},
    François Deheeger\textsuperscript{\rm 2},\\
    Mathilde Mougeot\textsuperscript{\rm 1},
    Nicolas Vayatis\textsuperscript{\rm 1}
}
\begin{document}

\maketitle

\begin{abstract}
    This paper proposes a novel $k$-medoids approximation algorithm to handle large-scale datasets with reasonable computational time and memory complexity. We develop a local-search algorithm that iteratively improves the medoid selection based on the estimation of the $k$-medoids objective. A single batch of size $m \ll n$ provides the estimation, which reduces the required memory size and the number of pairwise dissimilarities computations to $\mathcal{O}(mn)$, instead of $\mathcal{O}(n^2)$ compared to most $k$-medoids baselines. We obtain theoretical results highlighting that a batch of size $m=\mathcal{O}(\log(n))$ is sufficient to guarantee, with strong probability, the same performance as the original local-search algorithm. Multiple experiments conducted on real datasets of various sizes and dimensions show that our algorithm provides similar performances as state-of-the-art methods such as FasterPAM and BanditPAM++ with a drastically reduced running time.
\end{abstract}

\begin{links}
\link{Code}{https://github.com/antoinedemathelin/obpam}
\end{links}

\section{Introduction}
\label{introduction}

The $k$-medoids problem consists in choosing $k$ medoids from a set of $n$ points $\mathcal{X}_n$, minimizing the sum of the pairwise dissimilarities between the $n$ points and their nearest medoid. This problem has many uses in machine learning, in particular for clustering, subset selection and active learning \citep{bhat2014kmedoidsClustering, wei2015submodularity, kaushal2019learning, deMathelin2021discrepancy}. The $k$-medoids problem is related to $k$-medians, $k$-means and facility location \cite{schubert2021fastPAM}. One specificity of $k$-medoids is to consider generic dissimilarities (non-necessarily metric). In machine learning applications, the dissimilarity function can involve heavy computational costs, especially when computed between complex data types such as images, texts, or time series.

The $k$-medoids problem is a discrete optimization problem known to be NP-hard \citep{Hakimi1979NPhard}, for which a wide variety of approximation algorithms have been developed. Many $k$-medoids approximations are greedy or local-search algorithms, which improve a medoid selection sequentially by either adding or removing a medoid or swapping one medoid with another data point \cite{dohan2015kmedoidsInPractice}. The main local-search approach considered by the operations research communities is called PAM (Partitioning Around Medoid) \cite{kaufman1987PAMold, kaufman1990PAM}. This algorithm starts from an initial choice of $k$ points (potentially greedily selected) and then performs a series of "swaps". The state-of-the-art PAM algorithms are the FastPAM variants \citep{schubert2021fastPAM, schubert2022fastPAMpython}.

A major drawback of these approximation algorithms is the computational burden encountered for large values of $n$. Indeed, the main algorithms require the computation and in-memory conservation of pairwise dissimilarities between the $n$ points, resulting in a complexity of $\mathcal{O}(n^2)$. Nowadays, with the rise of Big Data, and the focus on reducing computational resources, there is a strong incentive to build algorithms that overcome this $\mathcal{O}(n^2)$ limitation.

Subsampling is a straightforward solution to reduce the number of dissimilarity calculations. The idea is to use an approximation algorithm (like PAM) on a subsample of size $m \ll n$ selected among the $n$ data points, resulting in a reduction of the time and memory complexities from $\mathcal{O}(n^2)$ to $\mathcal{O}(m^2)$. Previous works have proven that this simple approach yields appealing statistical guarantees over the approximation error for relatively small batch size $m$ \cite{mishra2001sublinear, thorup2005quickKmedian, mettu2004optimalTimeBounds, meyerson2004KmedianIndependantDataSize, huang2023powerOfUniform, guha2016clusteringDataStream, czumaj2007sublinear}. In this category of methods, the CLARA algorithm (Clustering LARge Applications) \cite{kaufman1986CLARAold, kaufman2008CLARA} is the most commonly used. The main drawback of the subsampling approach is the loose approximation of considering only the medoid candidates in the $m$ subsampled data points, resulting in worse clustering quality \cite{tiwari2020banditpam}. A recent method, BanditPAM, leverages Bandit algorithms to deal with this limitation \citep{tiwari2020banditpam, tiwari2023banditpam++}. BanditPAM keeps the $n$ data points as potential medoid candidates but only computes the dissimilarities for data points with high medoid potential, thus reducing the number of pairwise dissimilarity computations to $\mathcal{O}(n \log(n))$ for one medoid selection or one swap step of the PAM algorithm. Although BanditPAM provides a medoid selection close to PAM (in terms of $k$-medoids objective), the Bandit-based framework requires the computation of new pairwise dissimilarities at each medoid selection, which then results in computing $\mathcal{O}(T n \log(n))$ pairwise dissimilarities, with $T$ the number of iterations of the algorithm.

In this paper, we propose an alternative approach to address the $\mathcal{O}(n^2)$ limitations of local-search $k$-medoids algorithms. To avoid computing new pairwise dissimilarities at each swap, we only compute the dissimilarities between the $n$ data points and a single batch of size $m$. Our theoretical analysis shows that $m = \mathcal{O}(\log(n))$ is sufficient to guarantee similar performances as FasterPAM with strong probability. Our algorithm called OneBatchPAM provides a $\mathcal{O}(T)$ speedup of time complexity compared to BanditPAM and a $\mathcal{O}(n / \log(n))$ speedup compared to FasterPAM for similar performances. We show through several experiments, conducted on real datasets, that OneBatchPAM proposes an efficient time / objective trade-off compared to multiple $k$-medoids algorithms.

\section{Related Works}

\subsection{Approximation algorithms for k-medoids}

The $k$-medoids problem is related to facility locations, $k$-medians (or $p$-medians) and $k$-means problems. A detailed comparison of these problems is given in \cite{schubert2021fastPAM}. In a nutshell, the main $k$-medoids particularity is to consider generic dissimilarities (non-necessarily metric as in $k$-medians) and to constrain the $k$ medoids to belong to the dataset $\mathcal{X}_n$ (unlike $k$-means). $k$-medoids can then be seen as a special case of the facility location problem, where at most $k$ facilities, belonging to the set of clients, can be opened with cost zero. The metric $k$-medoids problem is often considered, in which case the problem is similar to $k$-medians over discrete metric space \cite{schubert2021fastPAM}.

As solving the $k$-medoids problem is NP-hard, many algorithms have been developed to provide approximations in polynomial running time\footnote{i.e., a constant polynomial degree independent of $k$} \cite{kaufman1990PAM, charikar1999constantApproxKmedoids, li2013approximatingKmedian, bhat2014kmedoidsClustering}. A ``naive'' greedy approach selects the medoids sequentially by solving a $1$-medoid problem at each iteration. This approach is simple to implement and yields relatively good results in practice, but its theoretical approximation error in $\mathcal{O}(n)$ is quite large \cite{dohan2015kmedoidsInPractice}. It can be improved to $\mathcal{O}(\log(n))$ by the reverse greedy approach that starts with $n$ medoids and removes them one by one until reaching $k$ medoids \cite{chrobak2006reverseGreedy}. The most notable improvement to the greedy approach is the PAM algorithm \cite{kaufman1987PAMold, kaufman1990PAM}. It greedily initializes the set of medoids and then performs a series of swaps from one medoid to one non-medoid that improve the total objective. In the metric case, this local search approach provides a constant approximation ratio of $5$ which can be reduced to $3+\epsilon$ when swapping multiple medoids at each iteration \cite{arya2001local5approx}. Assuming pairwise dissimilarities are precomputed, the time complexity of the seminal PAM algorithm is $\mathcal{O}(T k n^2)$, with $T$ the number of swap steps. A notable recent improvement, called FastPAM \cite{schubert2021fastPAM, schubert2022fastPAMpython}, reduces the PAM's time complexity to $\mathcal{O}(T n^2)$ by using a smart decomposition of the swap evaluation \cite{schubert2021fastPAM}. We emphasize that the PAM algorithm and its variants are perhaps the most widespread approximation algorithms for $k$-medoids. It provides an appealing trade-off between approximation error and time complexity. Although its theoretical approximation ratio is $5$, the error is often much smaller in practical use-cases (less than $2\%$ \cite{schubert2021fastPAM}).

When the dissimilarity evaluation is costly, and/or when the available memory is restricted. All aforementioned algorithms are limited by the $\mathcal{O}(n^2)$ pairwise dissimilarities computation cost and by the $\mathcal{O}(n^2)$ memory requirement to store the computed dissimilarities. Our work then focuses on reducing the time complexity of PAM while keeping similar performance. We therefore do not consider algorithms that propose improvement over the PAM performance at the price of additional computational efforts, such as \citep{li2013approximatingKmedian, byrka2017improvedApproxKmedian, ren2022globalKmedoidsBB}.

\subsection{Subsampling Methods}

Subsampling consists in performing a $k$-medoids algorithm on a subsample $\mathcal{X}_m$ of the original dataset $\mathcal{X}_n$, of size $m \ll n$. For instance, the CLARA algorithm \citep{kaufman2008CLARA} uses PAM on a subsample $\mathcal{X}_m$. It has been shown that a $k$-medoids algorithm with constant approximation can be derived, with great probability, using a random uniform subsample of size $m \simeq \mathcal{O}(k \log(n))$ \cite{mishra2001sublinear}. The required size of the subsample has been further reduced to $\mathcal{O}(k \log(k))$ with deeper analysis \cite{meyerson2004KmedianIndependantDataSize, czumaj2007sublinear}, which is independent of $n$. In this perspective, the CLARA algorithm proposes the heuristic $m = 40 + 2k$ for the subsample's size \citep{kaufman2008CLARA}. With such a setting, this subsampling method can drastically reduce the number of pairwise dissimilarity computations from $\mathcal{O}(n^2)$ to $\mathcal{O}(k^2)$. CLARA repeatedly computes a $k$-medoids approximation over multiple random subsamples drawn uniformly from $\mathcal{X}_n$ and selects the best set of $k$ medoids based on the evaluation over the whole dataset $\mathcal{X}_n$. Although only $\mathcal{O}(k^2)$ dissimilarity computations are needed to perform PAM over the subsample, one evaluation step requires to compute $nk$ dissimilarities, resulting in a $\mathcal{O}(T p nk)$ time complexity, with $T$ the number of subsamples. The cost of the evaluation step can be mitigated by evaluating the medoid set over another subsample from $\mathcal{X}_n$ \cite{meyerson2004KmedianIndependantDataSize}, in the same spirit as hold-out validation in machine learning.

The primary drawback of the subsampling approach is the approximation error, which is theoretically twice as large as that of performing the same $k$-medoids approximation on the full dataset of $n$ data points \cite{mishra2001sublinear, meyerson2004KmedianIndependantDataSize, czumaj2007sublinear}. In practice, this leads to a noticeable decline in performance.

A recent method BanditPAM \cite{tiwari2020banditpam, tiwari2023banditpam++} proposes an interesting idea based on Bandit evaluation of the swap and initialization steps in PAM. At each step, the best local improvement is estimated using multi-armed bandit techniques. BanditPAM therefore does not need to compute all pairwise dissimilarities but only the ones useful to find the best swap. The drawback of such an approach is to compute new dissimilarities at each step resulting in $\mathcal{O}((T + k) n \log(n))$ dissimilarity computations, with $T$ the number of swap evaluations. In this work, we propose to instead compute all pairwise distance between the $n$ data points and a batch of size $m = \mathcal{O}(\log(n))$. The same dissimilarities are used to evaluate all swap steps, resulting in $\mathcal{O}(n \log(n))$ dissimilarity computations.

\subsection{k-means++ as a proxy for k-medoids}

$k$-means++ is first designed as a seeding algorithm for $k$-means. It iteratively samples data points from $\mathcal{X}_n$ with a probability proportional to the distance raised to the power $p$ to the already sampled points for any $\ell_p$ distance. As the output of $k$-means++ is a set of cluster centers in $\mathcal{X}_n$, and since the objective of $k$-means is the same as $k$-medoids when considering the Euclidean distance, this algorithm can be used as a natural proxy for $k$-medoids. The $k$-means++ algorithm provides a $\mathcal{O}(\log(k))$-approximation for $k$-means and $k$-medians \cite{arthur2007kmeans++}, which is generally worse than PAM but it only require $\mathcal{O}(kn)$ pairwise dissimilarity computations instead of $\mathcal{O}(n^2)$.

Since the seminal work of \cite{arthur2007kmeans++}, two primary directions have been pursued to improve $k$-means++: enhancing the approximation error and reducing the time complexity. To improve the approximation, local-search algorithms are employed. These algorithms typically involve a random selection process similar to $k$-means++, followed by a swap if the new selection yields a better clustering outcome. For instance, single-swap local-search methods require $\mathcal{O}((Z+k) n)$ pairwise distance computations, with $Z$ the number of swap steps, and $\mathcal{O}(Zkn)$ additional operations \cite{lattanzi2019LocalSearchKmeans++}. Multiple-swap approaches can further refine the clustering but at a higher computational cost, involving  $\mathcal{O}((Z t + k) n)$ distance computations and $\mathcal{O}(Z n k^{2t-1})$ additional operations, with $t$ the number of simultaneous swaps \cite{beretta2024multiSwapKmeans++, huang2024linearTimeMultiSwapKmean++}. To accelerate the process, \cite{bachem2016kmc2} introduced kmc2, which speeds up $k$-means++ to $\mathcal{O}(L k^2)$ distance computations, with $L$ a method's specific parameter. Other methods leverage the specificity of Euclidean distance. For example, by projecting the data onto one dimension \cite{charikar2023simpleOneDimensionProjection}, or leveraging specific nearest neighbor structure \cite{cohen2020fastAccurateKmean++} \cite{pelleg1999acceleratingKmeansGeometric}.

\subsection{Coreset for k-medians}

According to \cite{feldman2020coresetsurvey}, a coreset is a data summarization technique that selects a subsample from a large dataset, preserving the information needed to perform specific tasks such as linear regression or clustering. Specifically, given a dataset $\mathcal{X}_n$, an objective function $\mathcal{L}$, and a set of queries $\mathcal{Q}$, a coreset $\mathcal{X}_m$ is a subsample of $\mathcal{X}_n$ for which any query $q \in \mathcal{Q}$ yields a similar objective value when computed on the coreset as when computed on the entire dataset, i.e., $\mathcal{L}(q, \mathcal{X}_n) \simeq \mathcal{L}(q, \mathcal{X}_m)$. In the context of $k$-medians clustering, the clustering cost of any $k$ centers computed on a coreset is approximately the same as the cost of these centers on the entire dataset.

While there is no consensual definition of a coreset, it generally refers to a ``strong coreset'' where the objective computed on the coreset is a $(1+\epsilon)$-approximation of the objective computed over the entire dataset for any query (e.g., any set of $k$ centers). Coreset is sometimes equated with subsampling when the set of queries is restricted to the coreset itself \cite{huang2023powerOfUniform, harpeled2004coresetsKmedian}. When the set of queries includes any combination of $k$ centers from the entire dataset, coresets are similar in spirit to OnebatchPAM. However, the coreset literature focuses on constructing sets that provide a $(1+\epsilon)$-approximation for any query, whereas OnebatchPAM focuses on achieving results comparable to PAM.

Various coresets for $k$-medians have been proposed, aiming to find the minimal size that guarantees the $(1+\epsilon)$-approximation for any set of $k$ centers \cite{harpeled2004coresetsKmedian, chen2009coresetsKmedian}. The best-known result for discrete metric $k$-medians (similar to metric $k$-medoids) is provided by \cite{feldman2020coresetsurvey}, with $m = \mathcal{O}(k \log(n) \epsilon^{-2})$. However, constructing such a coreset has a running time of $\mathcal{O}(pnk)$, with $p$ the data dimension. This has been improved by \cite{cohen2021newCoresetFramework}, which provides a similar-sized coreset with a running time of $\mathcal{O}(nk)$. The coreset size $m = \mathcal{O}(k \log(n) \epsilon^{-2})$ has been shown to be the minimal size to guarantee the $(1+\epsilon)$-approximation \cite{cohen2022optimalLowerBoundCoreset}.

This size can be reduced when the constraints are relaxed, leading to what is known as a weak coreset \cite{feldman2011kmediancoresetupperbound}. A weak coreset guarantees the $(1+\epsilon)$-approximation for only a subset of queries \cite{feldman2020coresetsurvey, jaiswal2024universalweakcoreset}. Other definitions of coresets include those with additive and multiplicative error approximations, such as lightweight coresets \cite{bachem2018LightWeightCoreset}. Smaller coresets yielding the $(1+\epsilon)$-approximation guarantee can be constructed when considering Euclidean space \cite{cohen2021improvedCoreset, feldman2011kmediancoresetupperbound}, or constrained problems, such as capacitated clustering (uniform distribution between clusters) \cite{huang2023powerOfUniform, braverman2022powerofUniformSamplingCoreset} and fair constraint clustering \cite{schmidt2020fairCoreset}.

\section{From PAM to OneBatchPAM}

\subsection{Notations}

The four parameters $n, k, p, m \in \mathbb{N}^*$ respectively denote the number of data points, the number of medoids, the problem dimension and the batch size. We consider the space $\mathcal{X}$ with $\mathcal{X} \subset \mathbb{R}^p$ and $d : \mathcal{X} \times \mathcal{X} \to \mathbb{R}_+$ a measure of dissimilarity over $\mathcal{X}$. We consider the set $\mathcal{X}_n = \{x_1, ..., x_n \}$ of $n$ data points in $\mathcal{X}$. We denote $\mathcal{P}_k(\mathcal{X}_n)$ the set of all subsets of $\mathcal{X}_n$ of size $k$. We aim at solving the $k$-medoids selection problem:
\begin{equation}
    \min_{\mathcal{M} \in \mathcal{P}_k(\mathcal{X}_n)} \; \sum_{i=1}^n \, d(x_i, \mathcal{M}) ,
\end{equation}
with $d(x_i, \mathcal{M}) = \min_{\tilde{x} \in \mathcal{M}} d(x_i, \tilde{x})$. We denote by $\mathcal{L}$ the objective function, such that $\mathcal{L}(\mathcal{M}) = \frac{1}{n} \sum_{x \in \mathcal{X}_n} d(x, \mathcal{M})$ for any $\mathcal{M} \in \mathcal{P}_k(\mathcal{X}_n)$. In the following, we consider the common assumption that one dissimilarity computation requires $\mathcal{O}(p)$ time complexity \cite{tiwari2020banditpam, schubert2021fastPAM}.

\subsection{PAM, FastPAM and FasterPAM}

The local-search approximation algorithm called PAM, performs several ``swap'' steps that progressively improve the medoid selection. This is formally described by the following recurrence equation:
\begin{equation}
\label{swap-step}
    \mathcal{M}_{t+1} = \argmin_{\substack{x \in \mathcal{M}_t, \\ x' \in \mathcal{X}_n \setminus \mathcal{M}_t}} \sum_{i=1}^n \, d\left(x_i, \left(\mathcal{M}_t \setminus \{ x \} \right) \cup \{ x' \} \right) .
\end{equation}
For any $t \in \{0, ..., T-1\}$, with $T$ the number of iterations. In the original PAM algorithm, the initial medoid set $\mathcal{M}_0$ is built using a greedy algorithm \cite{kaufman1990PAM}. The swap step described in Equation (\ref{swap-step}) consists in removing one medoid from $\mathcal{M}_t$ and adding a non-medoid from $\mathcal{X}_n \setminus \mathcal{M}_t$. This algorithm theoretically provides a $5$-approximation \cite{arya2001local5approx}. In practical scenarios, however, the approximation error is often below $2 \%$ \cite{schubert2021fastPAM}.

As highlighted by Equation (\ref{swap-step}) the ``naive'' approach to perform a swap step requires computing the sum of dissimilarities for every swap pair $(x, x') \in \mathcal{M}_t \times \mathcal{X}_n \setminus \mathcal{M}_t$, which leads to $\mathcal{O}(kn^2)$ operations to perform one swap. The FastPAM algorithm introduced in \cite{schubert2021fastPAM} proposes a modification of PAM that yields a $\mathcal{O}(k)$ speed up. The main idea lies in the fact that for each $x_i$, only the removal of the nearest medoid will modify the value of $d(x_i, \mathcal{M}_t)$. Therefore, only one pass through $\mathcal{X}_n$ is needed to compute the impact of removing one medoid for all $k$ medoids. The complexity of one swap step then only requires $\mathcal{O}(n^2)$ operations. Moreover, \cite{schubert2021fastPAM} shows that random initializations of the medoids lead to similar results as the greedy initialization but save $\mathcal{O}(kn^2)$ operations. Finally, additional speedups are derived by eagerly swapping a medoid with a non-medoid as soon as an improvement is found. In theory, eager swapping still requires $\mathcal{O}(n^2)$ operations for one swap but, in practice, it significantly speeds up the algorithm. The FastPAM algorithm with these additional improvements is called FasterPAM.

As noticed by \cite{tiwari2020banditpam}, the main drawback of FastPAM and FasterPAM is that they require to compute every dissimilarity between each pair of data points in $\mathcal{X}_n$, with complexity $\mathcal{O}(p n^2)$. A solution proposed by \cite{schubert2021fastPAM} is FasterCLARA which uses FasterPAM on subsamples of $\mathcal{X}_n$. However, this solution comes with large approximation error in practice. To overcome this issue, we propose the OneBatchPAM algorithm.

\subsection{OneBatchPAM}
\label{onebatchpam-sec}

The OneBatchPAM idea is the following: for any $x$, it is not necessary to compute every distance to every $x_i$ to perform the exact same swaps as FasterPAM. An estimation of the objectives on a subsample is sufficient. Theorem \ref{theorem1} will show that only a subsample of size $m = \mathcal{O}(\log(n))$ is needed to find the same series of swaps as FasterPAM with great probability.

Formally, OneBatch involves choosing a subsample $\mathcal{X}_m = \{x_{\sigma(1)}, ..., x_{\sigma(m)} \}$ drawn from $\mathcal{X}_n$, with $\sigma: \! [\![ 1, m ]\!] \to [\![ 1, n ]\!]$ the mapping indice function. The subsample $\mathcal{X}_m$ is used to estimate the best swap to perform, such that:
\begin{equation}
\label{onebatch-swap-step}
    \mathcal{M}_{t+1} = \argmin_{\substack{x \in \mathcal{M}_t, \\ x' \in \mathcal{X}_n \setminus \mathcal{M}_t}} \sum_{j=1}^m \, d\left(x_{\sigma(j)}, \left(\mathcal{M}_t \setminus \{ x \} \right) \cup \{ x' \} \right) .
\end{equation}
Compared to Equation (\ref{swap-step}), the sum is now only computed over $\mathcal{X}_m$. This modification drastically reduces the time complexity while keeping similar performances as FasterPAM with high probability as proven in Theorem \ref{theorem1} and Corollary \ref{corollary1}.

It must be underlined that Equation (\ref{onebatch-swap-step}) is not equivalent to subsampling as the search space is still $\mathcal{X}_n$. In subsampling methods, such as CLARA, we would have $x' \in \mathcal{X}_m \setminus \mathcal{M}_t$ instead of $x' \in \mathcal{X}_n \setminus \mathcal{M}_t$. This difference has a significant impact on the approximation error. By reducing the search space to $\mathcal{X}_m \setminus \mathcal{M}_t$, subsampling methods multiply by two the theoretical approximation error and, in practice, degraded performances are indeed observed.

\begin{theorem}
\label{theorem1}
    Let $\mathcal{X}_m$ be a subsample uniformly drawn from $\mathcal{X}_n$. Let $D = \max_{(x, x') \in \mathcal{X}_n} d(x, x')$ and $\Delta$ be the smallest difference between two objectives computed by FasterPAM. Then, for any $\delta \in ]0, 1]$, the OneBatchPAM algorithm returns the same set of medoid as FasterPAM with probability at least $1-\delta$ if:
    \begin{equation}
    \label{theorem1-eq}
        m \geq \frac{4 D^2}{\Delta^2} \log\left(\frac{2 T n}{\delta}\right) .
    \end{equation}
    Where $\Delta = \underset{t \in [\![0, T]\!]}{\min} \underset{\substack{x \in \mathcal{M}_t, \\ x' \in \mathcal{X}_n \setminus \mathcal{M}_t}}{\min} | \mathcal{L}(\mathcal{M}_t) - \mathcal{L}(\mathcal{M}_t \setminus \{x\} \cup \{x'\})|$
\end{theorem}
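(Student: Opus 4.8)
The plan is to recognize that OneBatchPAM differs from FasterPAM only in replacing the exact objective $\mathcal{L}(\mathcal{M}) = \frac{1}{n}\sum_i d(x_i,\mathcal{M})$ by its empirical counterpart $\hat{\mathcal{L}}(\mathcal{M}) = \frac{1}{m}\sum_j d(x_{\sigma(j)},\mathcal{M})$ over the batch $\mathcal{X}_m$, so the whole argument reduces to controlling the estimation error $\hat{\mathcal{L}}(\mathcal{M})-\mathcal{L}(\mathcal{M})$ uniformly over the medoid sets the algorithm inspects. First I would fix a medoid set $\mathcal{M}$ and observe that, since $\mathcal{X}_m$ is drawn uniformly from $\mathcal{X}_n$, $\hat{\mathcal{L}}(\mathcal{M})$ is an average of $m$ bounded variables $d(x_{\sigma(j)},\mathcal{M}) \in [0,D]$ with mean $\mathcal{L}(\mathcal{M})$. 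Hoeffding's inequality then gives $\mathbb{P}\left(|\hat{\mathcal{L}}(\mathcal{M})-\mathcal{L}(\mathcal{M})| \geq \epsilon\right) \leq 2\exp(-2m\epsilon^2/D^2)$ for every $\epsilon > 0$.

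The crucial structural point I would exploit next is that FasterPAM run on the full dataset is \emph{deterministic}: its trajectory $\mathcal{M}_0,\ldots,\mathcal{M}_T$ and the entire collection of candidate sets $\mathcal{M}_t\setminus\{x\}\cup\{x'\}$ it evaluates are fixed, non-random objects, while the only randomness lies in the batch $\mathcal{X}_m$. I would therefore define the good event $E$ on which $|\hat{\mathcal{L}}(\mathcal{M})-\mathcal{L}(\mathcal{M})| < \Delta/2$ holds simultaneously for every medoid set $\mathcal{M}$ appearing along this fixed trajectory, and prove by induction on $t$ that, on $E$, OneBatchPAM reproduces FasterPAM step by step. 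The induction is where $\Delta$ enters: each swap decision at step $t$ is settled by comparing a candidate objective against the current objective $\mathcal{L}(\mathcal{M}_t)$, and by definition every such comparison has true margin at least $\Delta$; since both objectives are estimated within $\Delta/2$, the sign of the comparison — hence the swap that is accepted and the decision to stop — is preserved. Consequently, if the two algorithms agree up to step $t$ they agree at step $t+1$, and on $E$ they return the same medoid set.

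It then remains to bound $\mathbb{P}(E^c)$ by a union bound and solve for $m$. I would count the candidate objectives evaluated: using FasterPAM's decomposition of the swap step, each of the $T$ iterations inspects $\mathcal{O}(n)$ distinct candidate sets, so at most $\mathcal{O}(Tn)$ objectives appear along the fixed trajectory. Applying the two-sided Hoeffding bound at radius $\epsilon = \Delta/2$ to each and summing, $\mathbb{P}(E^c) \leq 2Tn\exp(-m\Delta^2/(2D^2))$ (up to the harmless constant separating this from the stated $4D^2/\Delta^2$). Requiring the right-hand side to be at most $\delta$ and rearranging yields $m \geq (cD^2/\Delta^2)\log(2Tn/\delta)$, which is precisely the claimed threshold.

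The step I expect to be the main obstacle is the induction in the second paragraph, specifically the adaptivity hazard: a priori OneBatchPAM's trajectory is itself random and could wander onto medoid sets for which no concentration guarantee has been reserved, so a naive union bound over a data-dependent family would be invalid. The resolution I would emphasize is that one never needs guarantees off FasterPAM's deterministic path — conditioning on that fixed path, the event $E$ is defined over a fixed finite family of medoid sets, and the induction shows that on $E$ OneBatchPAM cannot leave the path. A secondary point to handle carefully is justifying that comparison against the current objective is the only type of decision FasterPAM makes, which is what makes the margin $\Delta$, rather than an inter-candidate gap, the correct quantity; this follows from the eager swap structure, in which a swap is accepted as soon as it improves on $\mathcal{M}_t$.
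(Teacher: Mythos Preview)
Your proposal is correct and follows essentially the same route as the paper's proof: both apply Hoeffding's inequality to each medoid-set objective appearing along FasterPAM's fixed (deterministic) trajectory, then take a union bound so that every comparison against $\mathcal{L}(\mathcal{M}_t)$ retains its sign, which forces OneBatchPAM to reproduce FasterPAM's swaps. Your framing via a single good event $E$ and an explicit induction is a slightly cleaner packaging of the same idea, and your observation about avoiding the adaptivity hazard by union-bounding only over FasterPAM's non-random path is exactly the device the paper uses implicitly when it defines $\mathcal{P}_t$ and $(x_t,x'_t)$ from the deterministic FasterPAM run.
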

\begin{proof}
    The proof follows the same framework as the proof of Theorem 1 in \cite{tiwari2020banditpam}. It consists in finding the minimal sample size which guarantees that the statistical error on the objectives remains smaller than the smallest objective difference, $\Delta$, with high probability. Consequently, OneBatchPAM performs the same swaps as FasterPAM. The detailed proof is reported in the supplementary materials.
\end{proof}
As stated by Theorem \ref{theorem1}, the dependence of $m$ with respect to $n$ is only $m = \mathcal{O}(\log(n))$. This implies a drastic reduction of the time complexity as formally described in the following corollary.
\begin{corollary}
\label{corollary1}
    The OneBatch PAM algorithm returns the same set of medoids as FasterPAM with arbitrarily high probability with time complexity:
    \begin{equation}
        \mathcal{O}\left( (p + T) n \log(n)  \right) .
    \end{equation}
\end{corollary}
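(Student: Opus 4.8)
The plan is to inherit the correctness claim directly from Theorem~\ref{theorem1} and to spend the effort on the running-time accounting. First I would fix a target failure probability $\delta \in \, ]0,1]$ and read off from Equation~(\ref{theorem1-eq}) the smallest admissible batch size. Treating $D$, $\Delta$ and $\delta$ as constants determined by the dataset and the dissimilarity $d$ (and thus independent of $n$), the only $n$-dependence in the bound is the factor $\log(2Tn/\delta) = \log(2/\delta) + \log T + \log n$, which is $\mathcal{O}(\log n)$ as soon as $T$ is at most polynomial in $n$. Hence $m = \mathcal{O}(\log n)$ suffices, and for this choice Theorem~\ref{theorem1} guarantees that OneBatchPAM reproduces the entire swap sequence of FasterPAM---and therefore its final medoid set---with probability at least $1-\delta$; the ``arbitrarily high probability'' statement follows by letting $\delta$ be as small as desired.

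It then remains to bound the time complexity when OneBatchPAM is run with this $m$, which I would split into two contributions. The first is the one-time precomputation of the dissimilarities between the $n$ data points and the $m$ batch points: there are $mn$ such pairs, each costing $\mathcal{O}(p)$ under the standing assumption on dissimilarity evaluation, for a total of $\mathcal{O}(pmn)$. Crucially, these values are computed once and reused across every swap step, which is the whole point of the single-batch design and the reason no new dissimilarities appear in later iterations.

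The second contribution is the cost of the $T$ swap steps. Here I would check that the FasterPAM decomposition survives the replacement of the full sum by the batch sum in Equation~(\ref{onebatch-swap-step}). In FasterPAM, one swap step costs $\mathcal{O}(n^2)$: for each of the $\mathcal{O}(n)$ insertion candidates $x' \in \mathcal{X}_n \setminus \mathcal{M}_t$, a single linear pass over the points evaluates the effect of removing each of the $k$ current medoids simultaneously, which is exactly the FastPAM trick that eliminates the $k$ factor. When the objective is the batch estimate, this linear pass ranges over the $m$ batch points instead of all $n$, so---using the precomputed dissimilarities---each candidate costs $\mathcal{O}(m)$ and one swap step costs $\mathcal{O}(mn)$; over $T$ steps this is $\mathcal{O}(Tmn)$.

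Summing the two contributions gives $\mathcal{O}(pmn + Tmn) = \mathcal{O}((p+T)mn)$, and substituting $m = \mathcal{O}(\log n)$ yields the claimed $\mathcal{O}((p+T)\,n\log n)$. The step I expect to demand the most care is the third one: verifying that restricting the summation index to the batch does not break the shared-pass bookkeeping of FastPAM/FasterPAM, so that all $k$ medoid removals can still be evaluated together in a pass of length $m$ rather than $n$, giving the clean $\mathcal{O}(mn)$-per-step bound rather than something carrying an extra $k$ factor.
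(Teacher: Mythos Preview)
Your proposal is correct and matches the paper's (implicit) reasoning: the paper does not spell out a formal proof of the corollary, but the argument you give---take $m=\mathcal{O}(\log n)$ from Theorem~\ref{theorem1}, charge $\mathcal{O}(pmn)$ for the one-time dissimilarity precomputation, and $\mathcal{O}(mn)$ per swap step via the FasterPAM decomposition restricted to the batch---is exactly what the surrounding text and Algorithm~2 (the inner loop over $j\in[1,m]$ inside the loop over $i\in[1,n]$) make clear. Your caveat that $T$ should be at most polynomial in $n$ to absorb $\log T$ into $\mathcal{O}(\log n)$ is appropriate and consistent with the paper's discussion of $T$.
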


Table 1 provides a detailed comparison of OneBatchPAM's complexity against other algorithms. OneBatchPAM achieves a complexity gain of $\mathcal{O}(n / \log(n))$ over FasterPAM due to subsampling, and at least a $\mathcal{O}(T)$ improvement over BanditPAM++, as it avoids computing new dissimilarities at each swap step. While OneBatchPAM may require more computational time compared to subsampling and $k$-means++, it offers a superior approximation error factor. It is important to note that the values in Table 1 are theoretical; in practical scenarios, the performance comparison between methods can vary. For example, the approximation errors for PAM-based algorithms are often significantly lower than $5$.

\begin{table}[ht]
\centering
\small
\begin{tabular}{l|c|c}
\toprule
Algorithm & Complexity & Approximation  \\
\midrule
FasterPAM  & $(p+T) n^2$ & $5$  \\
BanditPAM++ & $ p (T+k) n \log(n) $ & $5$ \\
\midrule
\textbf{OneBatchPAM} & $(p + T) n \log(n)$ & $5$ \\
\midrule
FasterCLARA  & $I \left((p + T) k^2 +  p k n \right)$ & $10$ \\
$k$-means++ & $p k n$ & $\log(k)$ \\
\bottomrule
\end{tabular}
\caption{Summary of theoretical time complexity and approximation error. $T$ is the number of swaps iterations and $I$ the number of subsamples.}
\label{complexity}
\end{table}

\textbf{How many iterations $T$ are needed?} Generally, the larger the value of $T$, the better the objective, but this also increases the time complexity. It is important to note that the algorithms may terminate before reaching $T$ swaps if a local minimum is attained. According to \cite{tiwari2023banditpam++} and \cite{schubert2021fastPAM}, in practice, the required number of swaps is typically $\mathcal{O}(k)$. If a threshold $\epsilon$ on the improvement is set instead of a maximum number of iterations, such that the algorithm terminates when no swap is $1-\epsilon$ better than the current medoid selection, then the number of swaps is at most $T = \mathcal{O}(\log(n) / \epsilon)$.

\textbf{How $\mathcal{X}_m$ should be sampled?} Theorem \ref{theorem1} demonstrates that uniform sampling is sufficient to obtain good guarantees with a relatively small subset. However, a natural question arises: can we improve this with a more specific selection method? One initial approach consists in modifying the dissimilarity between the subsampled points and themselves as follows: $d(x_{\sigma(j)}, x_{\sigma(j)}) = +\infty$ for any $j \in \{1, \ldots, m\}$. We empirically observed that this adjustment prevents the medoid selection from being biased toward the subsampled data points. A second approach is to reweight the uniform sample to correct any potential sample bias. Since all distances between $\mathcal{X}_n$ and $\mathcal{X}_m$ are computed to perform OneBatchPAM, we recommend using the nearest neighbor sample bias correction method from \cite{loog2012nearest}. In this method, the importance of the data point $x_{\sigma(j)}$ is proportional to the number of data points in $\mathcal{X}_n$ whose nearest neighbor in $\mathcal{X}_m$ is $x_{\sigma(j)}$. Additionally, specific sampling techniques, such as those used to build coresets, may also be considered \cite{bachem2018LightWeightCoreset}.

\section{Discussion and Limitations}

\textbf{Minimum sample size of OneBatchPAM derived in Theorem \ref{theorem1}}.
The factor $1 / \Delta$ in the sample size lower bound also appears in the theoretical time complexity of BanditPAM. It is implicitly assumed that the minimum objective difference, $\Delta$, is not null \cite{tiwari2020banditpam}. The inverse proportionality between $m$ and $\Delta^2$ indicates that OneBatchPAM may require a large subsample to perform the exact same swaps as FasterPAM if two objectives are close. This can happen if two data points $x, x' \in \mathcal{X}_n$ are close. In that case, OneBatchPAM may estimate that adding $x$ to the set of medoids instead of $x'$ is more efficient while FatserPAM may do the opposite. However, as the difference between both objectives is small, OneBatchPAM will likely return a set of medoids with close performance to the one of FasterPAM. This is confirmed in our empirical experiments where OneBatchPAM provides close objectives compared to FasterPAM (around $2 \%$ error) but not exactly the same. We emphasize that the purpose of Theorem \ref{theorem1} is essentially to highlight the dependence of $m$ relative to $n$. Indeed, many upper bound approximations are involved in the derivation of the Theorem's result, hence using the exact value of Equation (\ref{theorem1-eq}) for $m$ may be disproportionate. In practice, we do not estimate the ratio $D/\Delta$ to set the sample size, but instead choose a value proportional to $\log(n)$.

It is interesting to notice that the minimum sample size for OneBatchPAM does not directly depend on the number of medoids $k$. However, this dependence is somehow hidden in the number of swap steps $T$. As highlighted by \cite{schubert2021fastPAM}, when starting with a random medoid selection, one can expect at least $k$ swaps before reaching a local minimum.

\textbf{Comparison to BanditPAM and memory limitations of OneBatchPAM.} Both BanditPAM and OneBatchPAM rely on the estimation of the $k$-medoids objective to determine which swap to perform. However, they consider two different approaches for estimating this objective. BanditPAM gradually improves the objective's estimation of swap pair candidates using mini-batches while reducing the set of candidates as the estimation becomes more accurate. The process is repeated after each swap, as the update of the medoid set modifies the swap pairs' evaluation. This leads to a linear increase in pairwise dissimilarity computations relative to the number of iterations. In contrast, OneBatchPAM computes all pairwise dissimilarities between the entire dataset $\mathcal{X}_n$ and a subsample $\mathcal{X}_m$ only once, using these precomputed values for each swap step. Consequently, it avoids the linear scaling of dissimilarity computations with the number of iterations. It should be noted that this computational load reduction comes with an increase in memory consumption. Indeed, BanditPAM only requires $\mathcal{O}(n)$ memory space while OneBatchPAM needs $\mathcal{O}(n \log(n))$. Nevertheless, this memory usage is significantly more efficient than the $\mathcal{O}(n^2)$ memory requirement of FasterPAM.

\textbf{Comparison to coresets.} As discussed in the related works section, OneBatchPAM is closely associated with coresets used in the context of $k$-medians. The coresets literature essentially focuses on constructing subsets that provide a $(1+\epsilon)$-approximation for any $k$-medoids selection. This imposes a stronger constraint compared to OneBatchPAM, which focuses on achieving results similar to those of the PAM algorithm. This explains why the minimal sample size for OneBatchPAM $m = \mathcal{O}(\log(n))$ is smaller than the minimal size for coresets for $k$-medians clustering with discrete metric spaces, $m = \mathcal{O}(k \log(n) \epsilon^{-2})$ \cite{cohen2022optimalLowerBoundCoreset}. It is important to note, however, that the sample size for OneBatchPAM is derived from a uniform sample $\mathcal{X}_m$. Leveraging coreset construction techniques could potentially further reduce the required sample size and, consequently, the time complexity of OneBatchPAM.

\textbf{Overfitting for highly imbalanced datasets}. Overfitting is a potential risk for OneBatchPAM, especially when the batch is not representative of the full dataset. Overfitting issues especially arise in situations involving highly imbalanced datasets. For instance, if a small subset of points are very far from all others. In such a case, there is a low probability that any neighbors of these distant points will be included in the batch, potentially leaving these points ``not covered'' by any medoid at the end of the OneBatchPAM algorithm. A potential future improvement to our approach could be to construct the batch progressively, leveraging the computed distances to identify imbalances in the dataset and mitigate the issue by selecting data points that improve the ``representativeness'' of the batch.

\section{Experiments}

We conduct several experiments on real datasets to compare OneBatchPAM with state-of-the-art $k$-medoids algorithms in practical scenarios. Our implementation of OneBatchPAM is coded in Python with the Cython module. The experiments are run on a 8G RAM computer with 4 cores. The source code of the experiments is available on GitHub\footnote{\url{https://github.com/antoinedemathelin/obpam}}.

\subsection{Datasets and settings}

We conduct the experiments on the MNIST and CIFAR10 image datasets \citep{mnist-digits, krizhevsky2009cifar10} and $8$ UCI datasets \cite{Dua2019UCI}, arbitrarily selected, with various sizes and dimensions (cf. Table \ref{dataset-table}). The $\ell_1$ distance is used as the dissimilarity function. Experiments are performed for different values of $k$ in $\{10, 50, 100\}$. Each experiment is repeated $5$ times to compute the standard deviations.

We divide the datasets into two categories respectively called ``small scale'' and ``large scale'' to account for the fact that some algorithms cannot provide a medoid selection in reasonable time for datasets above $\sim 50000$ instances. In particular, FasterPAM is not able to handle the size of the MNIST dataset \cite{schubert2021fastPAM}.

\begin{table}[h!]
\small
\begin{tabularx}{\linewidth}{>{\hsize=1.5\hsize}X|>{\hsize=0.9\hsize}Y|>{\hsize=0.4\hsize}Y||>{\hsize=1.3\hsize}X|>{\hsize=1.1\hsize}Y|>{\hsize=0.7\hsize}Y}
\toprule
\multicolumn{3}{c||}{Small Scale} & \multicolumn{3}{c}{Large Scale} \\
  \midrule
  Dataset & $n$ & $p$ & Dataset & $n$ & $p$  \\
  \midrule
  abalone & 4,176 & 8 & CIFAR & 50,000 & 3072 \\
bankruptcy & 6,819 & 96 & MNIST & 60,000 & 784 \\
mapping & 10,545 & 28 & dota2 & 92,650 & 117 \\
drybean & 13,611 & 16 & gas & 416,153 & 9 \\
letter & 19,999 & 16 & covertype &  581,011 & 55 \\
\bottomrule
\end{tabularx}
\caption{Datasets Summary. $n$ and $p$ are respectively the dataset's size and dimension.}
\label{dataset-table}
\end{table}

\subsection{Competitors and Hyper-parameters}

The following two kinds of competitors are considered
\begin{itemize}
    \item \textbf{PAM Algorithms}. we consider the PAM variants: FasterPAM, BanditPAM++ and FasterCLARA, as well as the Alternate approach \cite{park2009Alternate} although it is not formally a PAM method. We use the official implementations of BanditPAM++\footnote{\url{https://github.com/motiwari/BanditPAM}} \citep{tiwari2023banditpam++}. The other algorithms are found in the Python library \texttt{kmedoids}\footnote{\url{https://github.com/kno10/python-kmedoids}}, providing the official implementation of FasterPAM \citep{schubert2022fastPAMpython}.
    \item \textbf{$k$-means++ Algorithms}. We consider the original $k$-means++ algorithms and the two variants introduced in the related works: kmc2 \cite{bachem2016kmc2} and $k$-means++ with local-search (LS-k-means++) \cite{lattanzi2019LocalSearchKmeans++}.
\end{itemize}

If nothing else is specified the default hyperparameters are selected for the method. For BanditPAM++, we consider the three different settings of swap iterations: $T \in \{0, 2, 5\}$. We noticed that larger values of this parameter lead to excessive running time. For FasterCLARA we consider two different settings for the number of subsampling repetitions: $I \in \{5, 50\}$. The sample size is set to $m = 80 + 4 k$ as suggested in \cite{schubert2021fastPAM}. Three different chain lengths are considered for kmc2: $L = \{20, 100, 200\}$ and two different number of local search iterations for LS-k-means++: $Z = \{5, 10\}$. When different values of a parameter $P$ are used for an algorithm $Alg$, we denote the corresponding variants by $Alg$-$P$.

For OneBatchPAM, we use a sample size of $m = 100 \log(k n)$. The four following subsampling techniques introduced in Section \ref{onebatchpam-sec} are considered: \textbf{Unif}: uniform sampling; \textbf{Debias}:  uniform sampling with $d(x_{\sigma(j)}, x_{\sigma(j)}) = +\infty$ for any $j \in \{1, \ldots, m\}$; \textbf{NNIW}: uniform sampling with nearest-neighbor importance weighting and \textbf{LWCS}: sample built through the ``lightweight coreset'' technique from \cite{bachem2018LightWeightCoreset}.

\begin{figure*}[ht]
    \centering
    \begin{minipage}{0.24\linewidth}
        \centering
        \includegraphics[width=\linewidth]{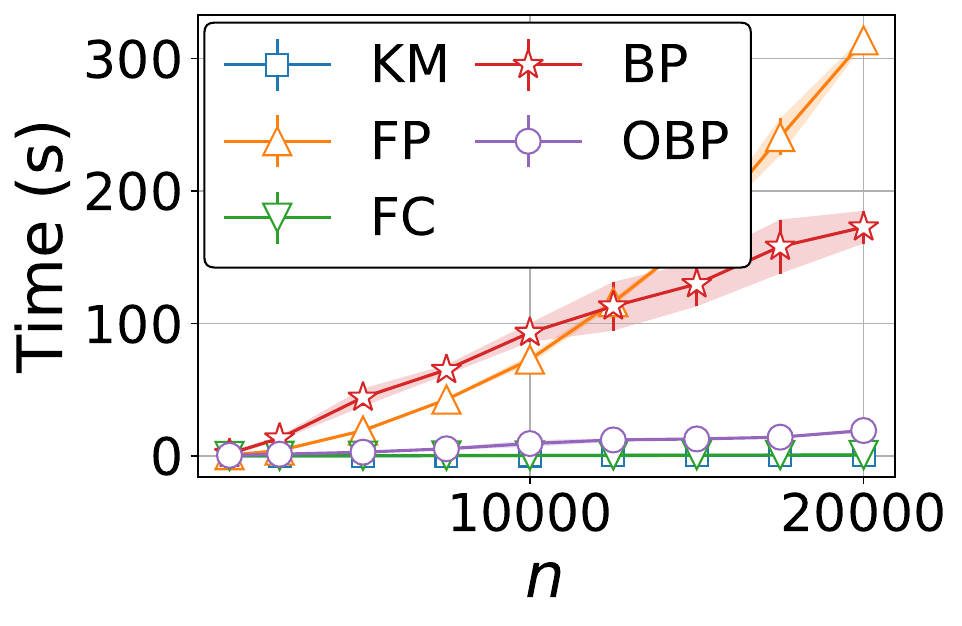} 
    \end{minipage}
    \begin{minipage}{0.24\linewidth}
        \centering
        \includegraphics[width=\linewidth]{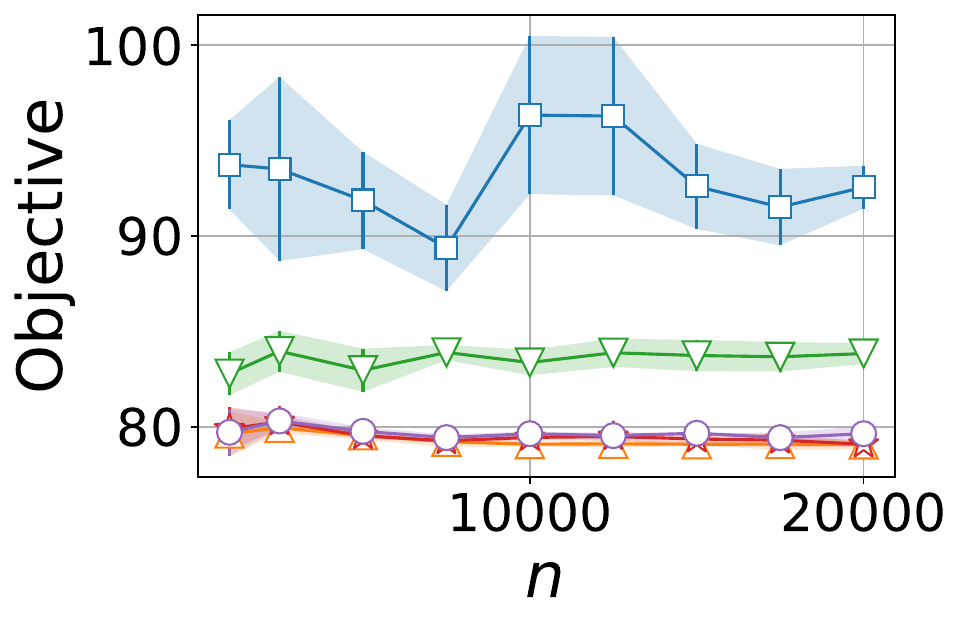} 
    \end{minipage} 
    \begin{minipage}{0.24\linewidth}
        \centering
        \includegraphics[width=\linewidth]{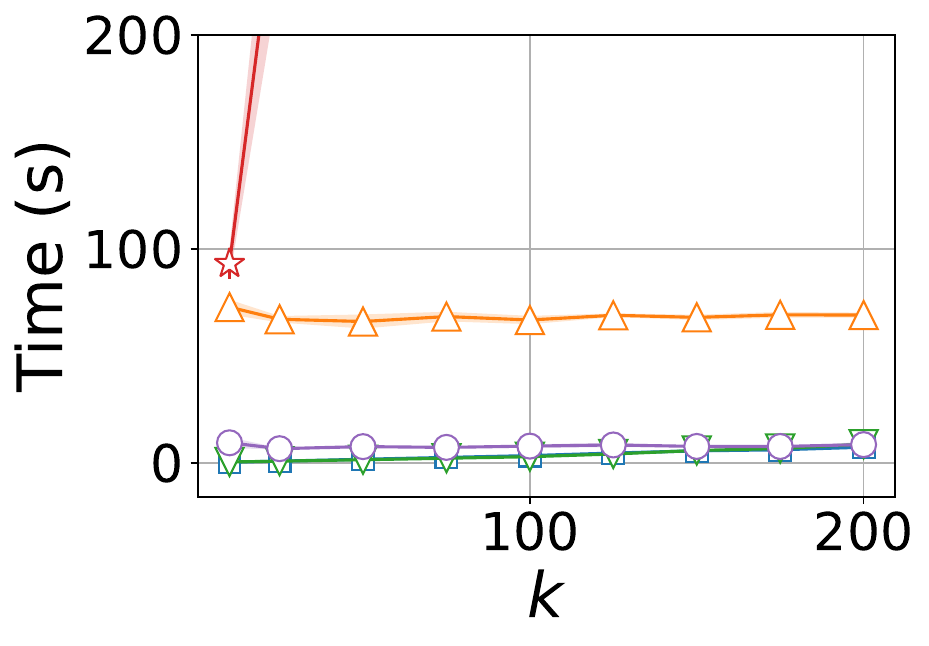}
    \end{minipage}
    \begin{minipage}{0.24\linewidth}
        \centering
        \includegraphics[width=\linewidth]{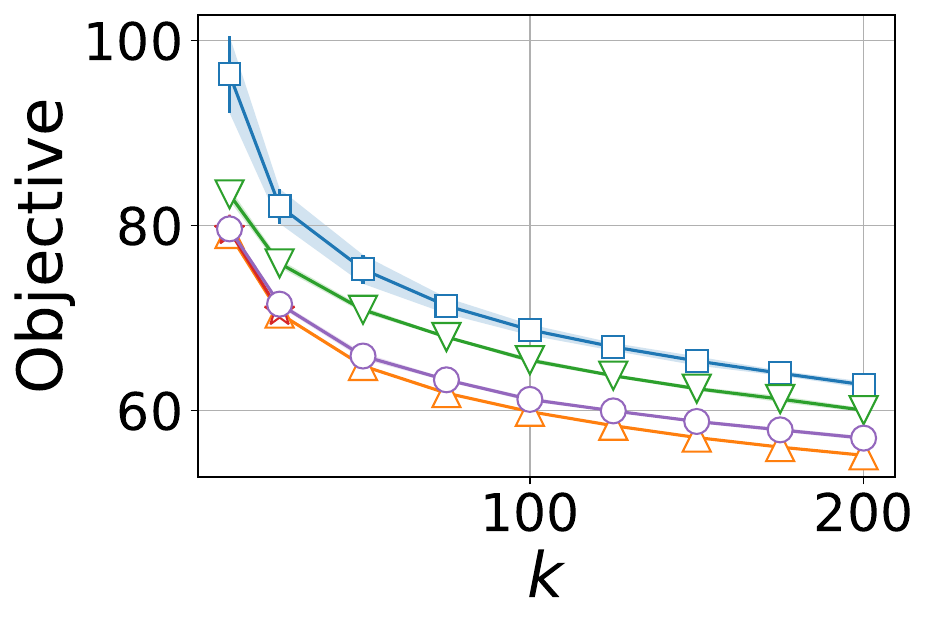} 
    \end{minipage}
    \caption{Evolution of the running time and objective on the MNIST dataset. Left: evolution as a function of $n$ for $k = 10$. Right: evolution as a function of $k$ for $n = 10000$. The results for five competitors are reported: k-means++ (KM), FasterPAM (FP), FasterCLARA-5 (FC), BanditPAM++-2 (BP), OneBatchPAM (OBP)}
    \label{evol-N}
\end{figure*}

\subsection{Results}

The methods are compared in terms of both objective value and computational time. To provide a normalized measure between datasets, we consider the ``delta relative objective'' ($\Delta$RO) and ``relative time'' (RT), defined for any algorithm $\mathcal{A}$ as follows:
\begin{equation}
    \Delta\text{RO}(\mathcal{A}) = \frac{\mathcal{L}(\mathcal{M}^\mathcal{A})}{\mathcal{L}(\mathcal{M}^{\mathcal{A}^*})} - 1 \; ; \; \text{RT}(\mathcal{A}) = \frac{T(\mathcal{A})}{T(\mathcal{A}^*)} .
\end{equation}
Where $\mathcal{M}^\mathcal{A}$ is the set of medoids selected by algorithm $\mathcal{A}$, $\mathcal{A}^*$ refers to the algorithm providing the best objective.

\smallskip

\smallskip

\textbf{Evolution of the objective and running time for different values of $n$ and $k$}. Figure \ref{evol-N} shows the objective and running time of five algorithms for different $(k, n)$ settings on the MNIST dataset. In each graph, OneBatchPAM ranks among the best methods both in terms of objective and running time. The time evolution of OneBatchPAM is similar to the one of $k$-means++ and FasterCLARA-5, and significantly smaller than BanditPAM++ and FasterPAM, especially for large values of $n$. Additionally, the objective evolution of OneBatchPAM closely matches that of FasterPAM, while FasterCLARA-5 and $k$-means++ provide larger objective values.

\textbf{Aggregated Results.} Table \ref{tab:small} presents the averaged results over the three values of $k \in \{10, 50, 100\}$, the five repetitions of the experiments and the respective five ``small scale'' and ``large scale'' datasets. The detailed results per dataset and value of $k$ are reported in the supplementary materials. As expected, FasterPAM provides the best objective and Random the fastest medoid selection for the small scale experiments. The OneBatchPAM variants reduce the computational burden of FasterPAM by a factor of $7$ on average (RT = $15\%$) for a small penalization of the objective value ($1.7 \%$ compared to FasterPAM for the NNIW variant). This observation highlights the efficiency of OneBatchPAM to provide a fast and accurate medoid selection. Notice that the time reduction factor increases with the number of samples. The relative time for OneBatchPAM is equal to $8.5 \%$ for the \textit{letter} dataset, which corresponds to a reduction factor of around $12$ (cf. detailed results in supplementary materials). We observe that $k$-means++ and FasterCLARA-$5$ are faster than OneBatchPAM (by a factor of around $7$ for FasterCLARA-$5$). However, the running time reduction comes with a significant penalization of the objective: respectively $13\%$ and $30\%$ for FasterCLARA-$5$ and $k$-means++. For large scale datasets, FasterPAM and BanditPAM++ fail to provide medoid selections within reasonable computational times, positioning OneBatchPAM as the method with the best objective ($\Delta$RO = 0). Similar to the small-scale experiments, FasterCLARA-$5$ is $7$ times faster than OneBatch but is $8\%$ worse in terms of objective. kmc2 is even faster, however, its objective is close to the random selection's objective.

Regarding the OneBatchPAM variants, we observe that debiasing offers a modest improvement compared to uniform sampling ($\sim 0.2\%$). The gain is higher for large values of $k$ (around $1\%$), as highlighted in the detailed results. While the LWCS method degrades performance (likely because LWCS is primarily designed to provide strong theoretical guarantees for $k$-means++ rather than PAM), the NNIW variant shows significant objective improvements (above $1.2\%$) over uniform sampling with comparable computational time. This observation supports the systematic use of nearest neighbor importance weighting in OneBatchPAM. Indeed, the pairwise dissimilarities needed to compute the importance weights are also required by the OneBatchPAM core algorithm, which explains why using NNIW has a negligible impact on the running time.

\begin{table}[ht]
    \centering
    \small
\begin{tabular}{l|cc|cc}
\toprule
\multirow{2}{*}{Method} & \multicolumn{2}{c|}{Small Scale} & \multicolumn{2}{c}{Large Scale} \\
  & RT & $\Delta$RO & RT & $\Delta$RO \\
\midrule
Random & \textbf{0.0} & 62.9 & \textbf{0.0} & 20.3 \\
FasterPAM & 100.0 & \textbf{0.0} & Na & Na \\
Alternate & 161.1 & 20.0 & Na & Na \\
FasterCLARA-5 & 2.8 & 13.0 & 15.0 & 8.0 \\
FasterCLARA-50 & 30.0 & 10.9 & 161.7 & 7.1 \\
\midrule
kmc2-20 & 14.5 & 31.3 & 0.5 & 18.2 \\
kmc2-100 & 72.2 & 31.9 & 2.4 & 17.6 \\
kmc2-200 & 153.6 & 33.0 & 5.2 & 18.6 \\
k-means++ & 1.6 & 30.4 & 78.8 & 18.4 \\
LS-k-means++-5 & 37.2 & 23.5 & 97.1 & 15.3 \\
LS-k-means++-10 & 73.1 & 20.1 & 121.6 & 13.7 \\
\midrule
BanditPAM++-0 & 930.2 & 3.6 & Na & Na \\
BanditPAM++-2 & 1670.1 & 2.8 & Na & Na \\
BanditPAM++-5 & 2880.7 & 2.2 & Na & Na \\
\midrule
OneBatchPAM-lwcs & 15.1 & 12.3 & 117.9 & 2.8 \\
OneBatchPAM-unif & 15.1 & 3.9 & 104.2 & 1.2 \\
OneBatchPAM-debias & 15.7 & 3.7 & 100.0 & 0.8 \\
OneBatchPAM-nniw & 15.5 & 1.7 & 100.0 & \textbf{0.0} \\
\bottomrule
\end{tabular}
\caption{Results Summary. The scores are averaged over the five repetitions of the experiment, the three values of $k \in \{10, 50, 100\}$ and the five respective ``small scale'' and ``large scale'' datasets. RT and $\Delta$RO are given in percentage. Standard deviations are reported in Appendix.}
    \label{tab:small}
\end{table}

\section{Conclusion and Perspectives}

This paper introduces OneBatchPAM, a novel $k$-medoids algorithm that accelerates FasterPAM by using a single batch of size $m = \mathcal{O}(\log(n))$ to estimate the objective. Our experiments demonstrate that OneBatchPAM is an efficient alternative to subsampling for handling large datasets within a reasonable running time while achieving performance similar to FasterPAM (with less than $2 \%$ error). Future work will focus on refining the subsampling process to further improve the running time and the accuracy of the medoid selection.

\clearpage

\onecolumn








\begin{appendices}

\setlength{\parindent}{0pt}

\appsectiontitle{Algorithm}

\vspace{1em}

This section presents the pseudo-code of the OneBatchPAM algorithm. For simplicity, we consider the ``Uniform'' variant where the sample $\mathcal{X}_m$ is selected uniformly at random in $\mathcal{X}_n$ without reweighting and the two variants Debias and NNIW. For Algorithm \ref{fasterpam}, we respectively define $\texttt{near}(j)$, $\texttt{sec}(j)$ as the indices in $[1, k]$ of the nearest and second nearest medoid to $x_{\sigma(j)}$ in $\mathcal{M}$. We denote $d_{\texttt{near}(j)}$, $d_{\texttt{sec}(j)}$ the corresponding dissimilarities between $x_{\sigma(j)}$ and its respective nearest and second nearest medoid in $\mathcal{M}$. The Approximated-FasterPAM algorithm is close to FasterPAM \cite{schubert2021fastPAM}. The difference lies in the loop of line 9, as the loop is performed only over the subsampled data points $x_{\sigma(j)}$.


\begin{algorithm}[H]
	\caption{OneBatchPAM}
	\label{algo1}
    	\begin{algorithmic}[1]
    	    \STATE \textbf{Inputs}: Data $\mathcal{X}_n$, number of medoids $k$, maximal number of iteration $T$, batch size $m$ 
            \STATE \textbf{Outputs}: Set of medoids $\mathcal{M}$
            \smallskip
            \STATE Uniformly select $\mathcal{X}_m \subset
            \mathcal{X}_n$ of size $m$ 
            \STATE Compute $d_{ij} = d(x_i, x_{\sigma(j)})$ for any $j \in [1, m]$ and any $i \in [1, n]$
            \STATE (For the NNIW variant) Compute $w_j$ according to \cite{loog2012nearest} and update $d_{ij} \leftarrow w_j d_{ij}$
            \STATE (For the Debias variant) Update $d_{jj} \leftarrow + \infty$
            \STATE Randomly select $\mathcal{M} \in \mathcal{P}_k(\mathcal{X}_n)$
            \STATE Approximated-FasterPAM$\left( \{ d_{ij} \}_{i \leq n, j 
 \leq m}, \mathcal{M}, T, k, n, m \right)$
\end{algorithmic}
\end{algorithm}

\begin{algorithm}[H]
\caption{Approximated-FasterPAM}
\label{fasterpam}
    \begin{algorithmic}[1]
        \STATE \textbf{Inputs}: $\{ d_{ij} \}_{i \leq n, j 
\leq m}$, $\mathcal{M}$, $k$, $T$, $n$, $m$ 
    \STATE \textbf{Outputs}: Set of medoids $\mathcal{M}$
    \smallskip
    \STATE For any $j \in [1, m]$ compute $\texttt{near}(j), \texttt{sec}(j), d_{\texttt{near}(j)}, d_{\texttt{sec}(j)}$
    \STATE For any $l \in [1, k]$, initialize $G_l = \sum_{j \in [1, m]} d_{\texttt{near}(j)} - d_{\texttt{sec}(j)}$
    \FOR {$1 \leq t \leq T$}
    \FOR {$1 \leq i \leq n$}
    \STATE Initialize $G_l^i \leftarrow G_l$ for any $l \in [1, k]$
    \STATE Initialize $G^i \leftarrow 0$
    \FOR {$1 \leq j \leq m$}
    \IF{$d_{ij} < d_{\texttt{near}(j)}$}
    \STATE $G^i \leftarrow G^i + d_{\texttt{near}(j)} - d_{ij}$
    \STATE $G_{\texttt{near}(j)}^i \leftarrow G_{\texttt{near}(j)}^i + d_{\texttt{sec}(j)} - d_{\texttt{near}(j)}$
    \ELSIF{$d_{ij} < d_{\texttt{sec}(j)}$}
    \STATE $G_{\texttt{near}(j)}^i \leftarrow G_{\texttt{near}(j)}^i + d_{\texttt{sec}(j)} - d_{\texttt{near}(j)}$
    \ENDIF
    \ENDFOR
    \STATE $l^* = \argmax_{l \in [1, k]} G_l^i$
    \STATE $G^i \leftarrow G^i + G_{l^*}^i$
    \IF{$G^i > 0$}
    \STATE Update $\mathcal{M}$: swap role of medoid of indice $l^*$ in $\mathcal{M}$ with $x_i$.
    \STATE Update $\texttt{near}(j), \texttt{sec}(j), d_{\texttt{near}(j)}, d_{\texttt{sec}(j)}$ and $G_l$ for any $l \in [1, k]$
    \ENDIF
    \ENDFOR
    \ENDFOR
\end{algorithmic}
\end{algorithm}

\clearpage

\appsectiontitle{Proof of Theorem \ref{theorem1}}

\setcounter{theorem}{0}
\begin{theorem}
    Let $\mathcal{X}_m$ be a subsample uniformly drawn from $\mathcal{X}_n$. Let $D = \max_{(x, x') \in \mathcal{X}_n} d(x, x')$ and $\Delta$ be the smallest difference between two objectives computed by FasterPAM. Then, for any $\delta \in ]0, 1]$, the OneBatchPAM algorithm returns the same set of medoid as FasterPAM with probability at least $1-\delta$ if:
    \begin{equation}
    \label{app-theorem1-eq}
        m \geq \frac{4 D^2}{\Delta^2} \log\left(\frac{2 T n}{\delta}\right) .
    \end{equation}
    Where $\Delta = \underset{t \in [0, T]}{\min} \underset{\substack{x \in \mathcal{M}_t, \\ x' \in \mathcal{X}_n \setminus \mathcal{M}_t}}{\min} | \mathcal{L}(\mathcal{M}_t) - \mathcal{L}(\mathcal{M}_t \setminus \{x\} \cup \{x'\})|$
\end{theorem}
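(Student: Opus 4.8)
The plan is to show that, with the stated batch size, the empirical objective computed on $\mathcal{X}_m$ is uniformly close to the true objective $\mathcal{L}$ along the entire run of FasterPAM, and that such uniform closeness forces OneBatchPAM to reproduce every swap decision of FasterPAM. First I would introduce the empirical estimator $\hat{\mathcal{L}}(\mathcal{M}) = \frac{1}{m}\sum_{j=1}^m d(x_{\sigma(j)}, \mathcal{M})$, which is exactly the normalized quantity driving the OneBatch swap criterion in Equation~\eqref{onebatch-swap-step}. Since $\mathcal{X}_m$ is drawn uniformly from $\mathcal{X}_n$, each summand is a copy of $d(X, \mathcal{M})$ for $X$ uniform on $\mathcal{X}_n$, so $\mathbb{E}[\hat{\mathcal{L}}(\mathcal{M})] = \mathcal{L}(\mathcal{M})$, and each summand lies in $[0, D]$ by definition of $D$. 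Hoeffding's inequality then yields, for any fixed $\mathcal{M}$ and any $\epsilon > 0$, a bound of the form $\Pr(|\hat{\mathcal{L}}(\mathcal{M}) - \mathcal{L}(\mathcal{M})| \ge \epsilon) \le 2\exp(-2 m \epsilon^2 / D^2)$ (sampling without replacement only sharpens the concentration).

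Next I would set the accuracy target to $\epsilon = \Delta/2$. The key deterministic observation is that, in the eager scheme of Approximated-FasterPAM (Algorithm~\ref{fasterpam}), every decision at iteration $t$ reduces to comparing the current objective $\mathcal{L}(\mathcal{M}_t)$ with a swapped objective $\mathcal{L}(\mathcal{M}_t \setminus \{x\} \cup \{x'\})$, and by the definition of $\Delta$ any such pair differs by at least $\Delta$. Hence, whenever both objectives are estimated within $\Delta/2$, the sign of the estimated gain matches the sign of the true gain, so OneBatchPAM takes exactly the same swap (or no-swap) decision as FasterPAM.

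To make this rigorous I must handle the fact that the sequence of visited medoid sets is itself random, since it depends on the batch, which creates an apparent circularity. I would resolve it by conditioning on the good event that $\hat{\mathcal{L}}$ is within $\Delta/2$ of $\mathcal{L}$ for the current set and all candidate swaps along the deterministic FasterPAM trajectory $\mathcal{M}_0, \dots, \mathcal{M}_T$. On this event an induction on $t$ shows that OneBatchPAM and FasterPAM share the same iterate $\mathcal{M}_t$ for all $t$: if they agree at step $t$, the previous paragraph shows the step-$t$ decision agrees, hence they agree at step $t+1$. Because the trajectory is now fixed, the number of objective estimates to control is deterministic and of order $Tn$ (one current objective and at most $n$ candidate new medoids per iteration, over $T$ iterations).

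Finally I would take a union bound over these $\mathcal{O}(Tn)$ estimates, each failing with probability at most $2\exp(-2m(\Delta/2)^2/D^2)$, and require the total to be at most $\delta$. Solving the resulting inequality of the form $2 T n \exp(-m\Delta^2/(2D^2)) \le \delta$ for $m$ produces a threshold $m \ge \frac{c D^2}{\Delta^2}\log(2Tn/\delta)$, where the constant $c = 4$ in the statement absorbs the two-sided Hoeffding factor together with the looseness of requiring both estimates in a comparison to be individually $\Delta/2$-accurate. I expect the main obstacle to be the third step: correctly breaking the circular dependence between the random batch and the random trajectory, and confirming that conditioning on accuracy along FasterPAM's deterministic path is both well-defined and sufficient to pin down OneBatchPAM's path. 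The remaining Hoeffding estimate and union-bound computation are routine.
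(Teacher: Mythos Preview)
Your proposal is correct and follows essentially the same route as the paper: apply Hoeffding to the empirical objective $\hat{\mathcal{L}}$, fix the accuracy level at $\Delta/2$, condition on the good event along FasterPAM's \emph{deterministic} trajectory $\mathcal{M}_0,\dots,\mathcal{M}_T$ (which cleanly resolves the circularity you flag), and take a union bound over the comparisons encountered. The paper organises the union bound per comparison (bounding $\Pr(\hat{\mathcal{L}}(\mathcal{M}_t^{(x,x')})\le \hat{\mathcal{L}}(\mathcal{M}_t))\le 2\tilde\delta$ for each rejected pair and the analogous inequality for the accepted pair), whereas you bound the single uniform-accuracy event; both lead to the same inequality for $m$.

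One minor imprecision: your count of ``at most $n$ candidate new medoids per iteration'' omits the $k$ choices of outgoing medoid, so the number of distinct swapped sets per step is up to $k(n-k)$, not $n$. The paper handles this by (loosely) bounding the number of pairs by $n^2$ per step and then observing that $\frac{4D^2}{\Delta^2}\log(2Tn/\delta)\ge \frac{2D^2}{\Delta^2}\log(2Tn^2/\delta)$ since $2T/\delta\ge 1$; the extra factor $2$ in the constant $c=4$ is precisely what absorbs this $k\le n$ factor, not merely the two-sidedness of Hoeffding as you suggest. With that correction your argument matches the paper's.
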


\begin{proof}
For any $t \in [0, T]$, $\mathcal{M}_t \in \mathcal{P}_k(\mathcal{X}_n)$ denotes the medoid selection of FasterPAM after $t$ swaps.

We denote by $\widehat{\mathcal{L}}(\mathcal{M})$ the empirical risk for any $\mathcal{M} \in \mathcal{P}_k(\mathcal{X}_n)$ such that $\widehat{\mathcal{L}}(\mathcal{M}) = \frac{1}{m} \sum_{j=1}^m d(x_{\sigma(j)}, \mathcal{M})$.

At each swap step $t \in [0, T]$, the FasterPAM algorithm evaluates the objective of several pairs $(x, x')  \in \mathcal{M}_t \times \mathcal{X}_n / \mathcal{M}_t$. It compares it to the current objective $\mathcal{L}(\mathcal{M}_t)$ until finding a pair with a lower objective (if no such pair is found, the algorithm terminates). Let's denote $\mathcal{P}_t \subset \mathcal{M}_t \times \mathcal{X}_n$ the swap pairs evaluated by FasterPAM with a larger objective than $\mathcal{L}(\mathcal{M}_t)$ and $(x_t, x'_t) \in \mathcal{M}_t \times \mathcal{X}_n$ the swap pair selected by FasterPAM. Thus, for any $t \in [0, T]$ and any $(x, x') \in \mathcal{P}_t$ we have:
\begin{gather}
    \mathcal{L}(\mathcal{M}_t) <  \mathcal{L}(\mathcal{M}^{(x, x')}) \\
    \mathcal{L}(\mathcal{M}_t) > \mathcal{L}(\mathcal{M}^{(x_t, x_t')}) ,
\end{gather}
where $\mathcal{M}_t^{(x, x')} = \mathcal{M}_t \setminus \{x\} \cup \{x'\}$.

Let's consider $\delta \in ]0, 1]$, we define $\tilde{\delta} \in ]0, 1]$ as follows:
\begin{equation}
    \tilde{\delta} = \frac{\delta}{2 T n^2}
\end{equation}
Let's consider a subsample size $m$ verifying Equation (\ref{app-theorem1-eq}). It can be noticed that:
\begin{equation}
    m \geq \frac{2 D^2}{\Delta^2} \log\left( \frac{1}{\tilde{\delta}} \right)
\end{equation}

To prove that OneBatchPAM selects the same swap pairs as FasterPAM, we have to show that, for any $t \in [0, T]$, the objective estimation for any swap pairs in $\mathcal{P}_t$ is lower than the current objective estimation, while the objective estimation for the pair $(x_t, x'_t)$ is larger, i.e., for any $t \in [0, T]$ and any $(x, x') \in \mathcal{P}_t$
\begin{gather}
    \widehat{\mathcal{L}}(\mathcal{M}_t) <  \widehat{\mathcal{L}}(\mathcal{M}^{(x, x')}) \\
    \widehat{\mathcal{L}}(\mathcal{M}_t) > \widehat{\mathcal{L}}(\mathcal{M}^{(x_t, x_t')}) ,
\end{gather}
For this purpose, we will show that the probability of the events $\widehat{\mathcal{L}}(\mathcal{M}_t) \geq  \widehat{\mathcal{L}}(\mathcal{M}^{(x, x')})$ and $\widehat{\mathcal{L}}(\mathcal{M}_t) \leq \widehat{\mathcal{L}}(\mathcal{M}^{(x_t, x_t')})$ is upper-bounded by $\tilde{\delta}$.

Let $t \in [0, T]$ and $\mathcal{M}_t \in \mathcal{P}_k(\mathcal{X}_n)$, by the Hoeffding inequality, we have for any $(x, x') \in \mathcal{P}_t$:
\begin{gather}
\label{hoef1}
\mathbb{P}\left(\widehat{\mathcal{L}}(\mathcal{M}_t) - \mathcal{L}(\mathcal{M}_t) \geq D \sqrt{\frac{\log(1 / \tilde{\delta})}{2 m}} \right) \leq \tilde{\delta} \\
\label{hoef2}
\mathbb{P}\left(\mathcal{L}(\mathcal{M}_t^{(x, x')}) - \widehat{\mathcal{L}}(\mathcal{M}_t^{(x, x')}) \geq D \sqrt{\frac{\log(1 / \tilde{\delta})}{2 m}} \right) \leq \tilde{\delta} 
\end{gather}
And,
\begin{gather}
\label{hoef3}
\mathbb{P}\left(\mathcal{L}(\mathcal{M}_t) - \widehat{\mathcal{L}}(\mathcal{M}_t) \geq D \sqrt{\frac{\log(1 / \tilde{\delta})}{2 m}} \right) \leq \tilde{\delta} \\
\label{hoef4}
\mathbb{P}\left(\widehat{\mathcal{L}}(\mathcal{M}_t^{(x_t, x_t')}) - \mathcal{L}(\mathcal{M}_t^{(x_t, x_t')}) \geq D \sqrt{\frac{\log(1 / \tilde{\delta})}{2 m}} \right) \leq \tilde{\delta} 
\end{gather}

Let's consider $(x, x') \in \mathcal{P}_t$, to simplify the notations, we define the five quantities: $C = D \sqrt{\frac{\log(1 / \tilde{\delta})}{2 m}}$, $L = \mathcal{L}(\mathcal{M}_t)$, $\widehat{L} = \widehat{\mathcal{L}}(\mathcal{M}_t)$, $L_x = \mathcal{L}(\mathcal{M}_t^{(x, x')})$, $\widehat{L}_x = \widehat{\mathcal{L}}(\mathcal{M}_t^{(x, x')})$, 

We have:
\begin{equation}
\begin{split}
    \mathbb{P}\left( \widehat{L}_x \leq \widehat{L} \right) & =  \mathbb{P}\left( \left\{\widehat{L}_x \leq \widehat{L} \right\} \cap \left\{\widehat{L}_x > L_x - C \right\} \right) + \mathbb{P}\left( \left\{\widehat{L}_x \leq \widehat{L} \right\} \cap \left\{\widehat{L}_x \leq L_x - C \right\} \right) \\
    & \leq \mathbb{P}\left(  L_x - C < \widehat{L}  \right) + \mathbb{P}\left( \widehat{L}_x \leq L_x - C \right) \\
    & \leq \mathbb{P}\left(  L_x - C < \widehat{L}  \right) + \tilde{\delta} \\
    & \leq \mathbb{P}\left( \left\{ L_x - C < \widehat{L} \right\} \cap \left\{ \widehat{L} < L + C \right\} \right) + \mathbb{P}\left( \left\{ L_x - C < \widehat{L} \right\} \cap \left\{ \widehat{L} \geq L + C \right\} \right) + \tilde{\delta} \\
    & \leq \mathbb{P}\left(  L_x - C < L + C \right) + \mathbb{P}\left( \widehat{L} \geq L + C \right) + \tilde{\delta} \\
    & \leq \mathbb{P}\left(  L_x - L < 2 C \right) + 2 \tilde{\delta} ,
\end{split}
\end{equation}
by using the respective Equations (\ref{hoef2}) and (\ref{hoef1}) for the third and sixth lines.

Therefore,
\begin{equation}
    \mathbb{P}\left( \widehat{\mathcal{L}}(\mathcal{M}_t^{(x, x')}) < \widehat{\mathcal{L}}(\mathcal{M}_t) \right) \leq \mathbb{P}\left( \mathcal{L}(\mathcal{M}_t^{(x, x')}) - \mathcal{L}(\mathcal{M}_t) < 2 C \right) + 2 \tilde{\delta}
\end{equation}

The quantity $\mathcal{L}(\mathcal{M}_t^{(x, x')}) - \mathcal{L}(\mathcal{M}_t)$ is positive, as $(x, x')$ is not a swap pair. Then, by definition of $\Delta$, we have:
\begin{equation}
    \mathcal{L}(\mathcal{M}_t^{(x, x')}) - \mathcal{L}(\mathcal{M}_t) \geq \Delta
\end{equation}
Then:
\begin{equation}
    \mathbb{P}\left( \widehat{\mathcal{L}}(\mathcal{M}_t^{(x, x')}) \leq \widehat{\mathcal{L}}(\mathcal{M}_t) \right) \leq \mathbb{P}\left( \Delta < 2 C \right) + 2 \delta
\end{equation}

If $m$ verifies Equation (\ref{app-theorem1-eq}), we have:
\begin{equation}
    2 C  \leq 2 D \sqrt{\frac{\log(1 / \tilde{\delta})}{4 \frac{D^2}{\Delta^2} \log(1 / \tilde{\delta})}} \leq \Delta .
\end{equation}
Then, $\mathbb{P}\left( \Delta < 2 C \right) = 0$ and we conclude that:
\begin{equation}
    \mathbb{P}\left( \widehat{\mathcal{L}}(\mathcal{M}_t^{(x, x')}) \leq \widehat{\mathcal{L}}(\mathcal{M}_t) \right) \leq 2 \tilde{\delta}
\end{equation}

By using  Equations (\ref{hoef3}) and (\ref{hoef4}), a similar proof can be derived to show that:
\begin{equation}
    \mathbb{P}\left( \widehat{\mathcal{L}}(\mathcal{M}_t^{(x_t, x_t')}) \geq \widehat{\mathcal{L}}(\mathcal{M}_t) \right) \leq 2 \tilde{\delta} ,
\end{equation}

Let's denote $A$ the event: ``OneBatchPAM performs a different swap as FasterPAM'':
\begin{equation}
    A = \bigcup_{t \in [0, T-1]} \left( \bigcup_{(x, x') \in \mathcal{P}_t} \left\{ \widehat{\mathcal{L}}(\mathcal{M}_t^{(x, x')}) \leq \widehat{\mathcal{L}}(\mathcal{M}_t) \right\} \cup \left\{ \widehat{\mathcal{L}}(\mathcal{M}_t^{(x_t, x_t')}) \geq \widehat{\mathcal{L}}(\mathcal{M}_t) \right\} \right)
\end{equation}

Then,
\begin{equation}
\begin{split}
    \mathbb{P}\left( A \right) & \leq \sum_{t \in [0, T-1]} \left( \sum_{(x, x') \in \mathcal{P}_t} \mathbb{P}\left( \widehat{\mathcal{L}}(\mathcal{M}_t^{(x, x')}) \leq \widehat{\mathcal{L}}(\mathcal{M}_t) \right) +  \mathbb{P}\left( \widehat{\mathcal{L}}(\mathcal{M}_t^{(x_t, x_t')}) \geq \widehat{\mathcal{L}}(\mathcal{M}_t)\right) \right) \\
    & \leq 2 T n^2 \tilde{\delta} \\
    & \leq \delta
\end{split}
\end{equation}

Finally, it can be concluded that, if $m$ verifies Equation (\ref{app-theorem1-eq}), then OneBatchPAM performs the same swaps as FasterPAM (and thus returns the same set of medoids) with probability at least $1-\delta$.

\end{proof}

\clearpage

\appsectiontitle{Detailed Results}

\begin{table}[H]
\centering
\caption{\textbf{Results Summary}. The scores are averaged over the five repetitions of the experiment, the three values of $k \in [10, 50, 100]$ and the five respective ``small scale'' and ``large scale'' datasets. RT and $\Delta$RO are given in percentage. The standard deviations computed over the five repetitions of the experiments and averaged over the five datasets and the three values of $k$ are reported in brackets.}
\begin{tabular}{l|cc|cc}
\toprule
\multirow{2}{*}{Method} & \multicolumn{2}{c|}{Small Scale} & \multicolumn{2}{c}{Large Scale} \\
  & RT & $\Delta$RO & RT & $\Delta$RO \\
\midrule
Random & 0.0 (0.0) & 62.9 (16.4) & 0.0 (0.0) & 20.3 (2.4) \\
FasterPAM & 100.0 (10.6) & 0.0 (0.3) & NaN & NaN \\
Alternate & 161.1 (43.2) & 20.0 (7.4) & NaN & NaN \\
FasterCLARA-5 & 2.8 (0.4) & 13.0 (1.5) & 15.0 (0.5) & 8.0 (0.6) \\
FasterCLARA-50 & 30.0 (1.3) & 10.9 (0.8) & 161.7 (3.2) & 7.1 (0.4) \\
kmc2-20 & 14.5 (0.5) & 31.3 (4.4) & 0.5 (0.0) & 18.2 (2.4) \\
kmc2-100 & 72.2 (1.0) & 31.9 (4.9) & 2.4 (0.2) & 17.6 (2.3) \\
kmc2-200 & 153.6 (9.2) & 33.0 (6.1) & 5.2 (0.3) & 18.6 (2.6) \\
k-means++ & 1.6 (0.1) & 30.4 (4.8) & 78.8 (4.1) & 18.4 (2.7) \\
LS-k-means++-5 & 37.2 (0.5) & 23.5 (3.3) & 97.1 (2.1) & 15.3 (1.8) \\
LS-k-means++-10 & 73.1 (2.2) & 20.1 (2.9) & 121.6 (2.9) & 13.7 (1.7) \\
BanditPAM++-0 & 930.2 (40.3) & 3.6 (0.3) & NaN & NaN \\
BanditPAM++-2 & 1670.1 (41.3) & 2.8 (0.3) & NaN & NaN \\
BanditPAM++-5 & 2880.7 (65.5) & 2.2 (0.2) & NaN & NaN \\
OneBatch-lwcs & 15.1 (1.2) & 12.3 (1.5) & 118.2 (7.9) & 2.7 (0.6) \\
OneBatch-unif & 15.1 (1.8) & 3.9 (0.7) & 104.2 (8.8) & 1.2 (0.4) \\
OneBatch-debias & 15.7 (3.0) & 3.7 (0.7) & 100.0 (6.3) & 0.8 (0.3) \\
OneBatch-nniw & 15.5 (1.6) & 1.7 (0.5) & 100.0 (4.1) & 0.0 (0.3) \\
\bottomrule
\end{tabular}
\label{avg-results-table}
\end{table}

\appsubsectiontitle{Detailed Results Small Scale}

\begin{table}[H]
\centering
\caption{\textbf{Relative Time (RT) per dataset for the ``small scale'' experiments}. The scores are averaged over the five repetitions of the experiment and the three values of $k \in [10, 50, 100]$. RT is given in percentage. The standard deviations are reported in brackets.}
\begin{tabular}{l|ccccc}
\toprule
\multirow{2}{*}{\backslashbox{Methods}{Datasets}} & abalone & bankruptcy & drybean & letter & mapping \\
& & & & & \\
\midrule
Random & 0.0 (0.0) & 0.0 (0.0) & 0.0 (0.0) & 0.0 (0.0) & 0.0 (0.0) \\
FasterPAM & 100.0 (5.6) & 100.0 (19.7) & 100.0 (8.4) & 100.0 (10.1) & 100.0 (9.3) \\
Alternate & 150.9 (34.7) & 97.9 (23.8) & 321.2 (124.6) & 140.3 (23.7) & 95.0 (9.4) \\
FasterCLARA-5 & 6.6 (1.2) & 1.9 (0.1) & 1.9 (0.5) & 1.1 (0.1) & 2.4 (0.1) \\
FasterCLARA-50 & 72.6 (1.7) & 21.3 (1.8) & 19.1 (1.3) & 11.3 (0.2) & 25.8 (1.6) \\
kmc2-20 & 59.4 (1.9) & 2.0 (0.0) & 4.6 (0.3) & 1.9 (0.0) & 4.5 (0.1) \\
kmc2-100 & 296.6 (3.5) & 9.8 (0.1) & 21.7 (0.3) & 9.6 (0.1) & 23.5 (1.1) \\
kmc2-200 & 634.0 (36.9) & 20.4 (0.9) & 45.9 (3.9) & 19.8 (0.9) & 47.9 (3.2) \\
k-means++ & 2.1 (0.1) & 2.2 (0.1) & 1.3 (0.0) & 0.9 (0.0) & 1.6 (0.1) \\
LS-k-means++-5 & 103.3 (1.6) & 8.3 (0.1) & 31.8 (0.3) & 20.6 (0.3) & 21.9 (0.1) \\
LS-k-means++-10 & 202.5 (4.3) & 14.3 (0.2) & 63.7 (1.1) & 40.3 (0.2) & 44.7 (5.4) \\
BanditPAM++-0 & 1388.5 (87.7) & 722.7 (41.8) & 858.0 (8.8) & 733.2 (7.7) & 948.8 (55.7) \\
BanditPAM++-2 & 2729.9 (52.8) & 1073.0 (43.8) & 1491.7 (30.3) & 1270.7 (7.8) & 1785.0 (71.9) \\
BanditPAM++-5 & 5394.9 (222.2) & 1574.8 (28.8) & 2434.9 (33.2) & 2068.0 (5.9) & 2930.8 (37.3) \\
OneBatch-lwcs & 34.3 (3.6) & 7.5 (0.6) & 12.2 (0.5) & 7.8 (0.5) & 13.6 (1.0) \\
OneBatch-unif & 31.3 (2.5) & 6.8 (0.2) & 13.3 (2.9) & 8.8 (1.9) & 15.4 (1.5) \\
OneBatch-debias & 36.8 (9.4) & 7.1 (0.2) & 11.9 (0.5) & 8.0 (1.2) & 14.7 (3.6) \\
OneBatch-nniw & 34.0 (3.7) & 7.1 (0.4) & 14.3 (3.0) & 8.5 (0.5) & 13.5 (0.7) \\
\bottomrule
\end{tabular}
\end{table}

\begin{table}[H]
\centering
\caption{\textbf{Delta Relative Objective ($\Delta$RO) per dataset for the ``small scale'' experiments}. The scores are averaged over the five repetitions of the experiment and the three values of $k \in [10, 50, 100]$. $\Delta$RO is given in percentage. The standard deviations are reported in brackets.}
\begin{tabular}{l|ccccc}
\toprule
\multirow{2}{*}{\backslashbox{Methods}{Datasets}} & abalone & bankruptcy & drybean & letter & mapping \\
& & & & & \\
\midrule
Random & 82.5 (15.1) & 32.4 (3.0) & 154.6 (59.7) & 26.4 (2.1) & 18.6 (2.3) \\
FasterPAM & 0.0 (0.2) & 0.0 (0.1) & 0.0 (0.9) & 0.0 (0.1) & 0.0 (0.1) \\
Alternate & 30.1 (10.9) & 12.1 (3.8) & 41.9 (19.8) & 9.4 (1.2) & 6.2 (1.3) \\
FasterCLARA-5 & 13.5 (1.4) & 12.2 (0.7) & 16.3 (3.8) & 13.5 (0.9) & 9.5 (0.6) \\
FasterCLARA-50 & 10.8 (0.8) & 11.2 (0.6) & 12.1 (1.7) & 11.9 (0.5) & 8.4 (0.4) \\
kmc2-20 & 40.3 (5.7) & 30.5 (3.9) & 42.4 (8.4) & 24.9 (2.1) & 18.4 (1.7) \\
kmc2-100 & 41.2 (6.6) & 31.3 (4.4) & 41.0 (9.4) & 26.0 (2.0) & 19.8 (2.4) \\
kmc2-200 & 45.7 (10.2) & 30.3 (4.7) & 42.5 (11.0) & 27.2 (1.7) & 19.5 (2.7) \\
k-means++ & 39.5 (7.7) & 31.7 (3.1) & 35.0 (7.8) & 26.1 (4.1) & 19.4 (1.6) \\
LS-k-means++-5 & 30.9 (6.0) & 23.7 (3.1) & 22.5 (3.0) & 22.6 (2.3) & 17.5 (2.1) \\
LS-k-means++-10 & 25.6 (4.7) & 21.1 (3.1) & 18.3 (3.2) & 20.5 (1.9) & 14.9 (1.4) \\
BanditPAM++-0 & 5.2 (0.6) & 3.6 (0.3) & 4.7 (0.5) & 2.1 (0.1) & 2.3 (0.2) \\
BanditPAM++-2 & 3.7 (0.4) & 2.6 (0.2) & 4.1 (0.5) & 1.8 (0.1) & 1.8 (0.2) \\
BanditPAM++-5 & 2.9 (0.4) & 2.1 (0.2) & 3.1 (0.3) & 1.6 (0.0) & 1.3 (0.1) \\
OneBatch-lwcs & 10.6 (1.2) & 3.1 (0.5) & 41.6 (5.0) & 4.1 (0.5) & 2.3 (0.4) \\
OneBatch-unif & 3.5 (0.6) & 3.6 (0.4) & 6.8 (1.7) & 3.3 (0.6) & 2.6 (0.3) \\
OneBatch-debias & 3.1 (0.6) & 3.0 (0.4) & 6.7 (1.7) & 3.3 (0.6) & 2.3 (0.3) \\
OneBatch-nniw & 1.4 (0.4) & 1.6 (0.2) & 2.4 (1.2) & 1.8 (0.2) & 1.4 (0.3) \\
\bottomrule
\end{tabular}
\end{table}

\begin{figure}[H]
    \centering
    \includegraphics[width=0.95\linewidth]{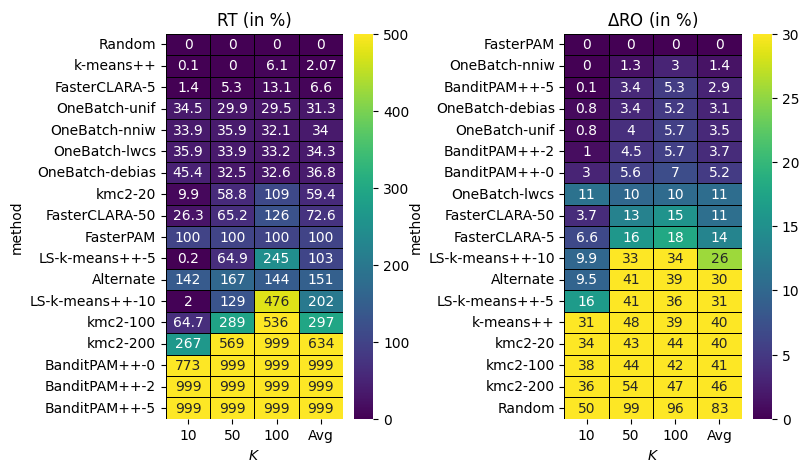} 
    \caption{\textbf{RT and $\Delta$RO for Abalone}}
\end{figure}

\begin{figure}[H]
    \centering
    \includegraphics[width=0.95\linewidth]{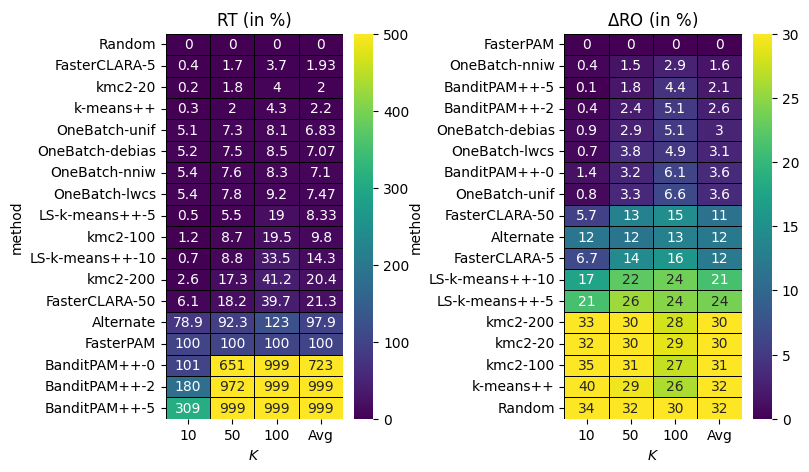} 
    \caption{\textbf{RT and $\Delta$RO for Bankruptcy}}
    
\end{figure}

\begin{figure}[H]
    \centering
    \includegraphics[width=0.95\linewidth]{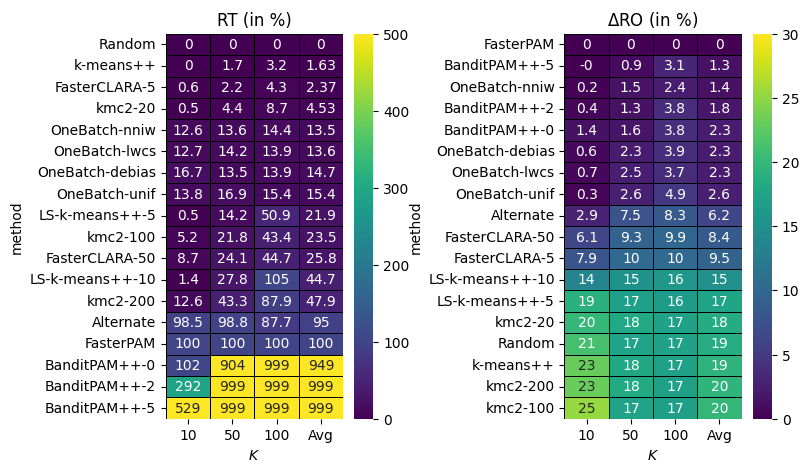} 
    \caption{\textbf{RT and $\Delta$RO for Mapping}}
    
\end{figure}

\begin{figure}[H]
    \centering
    \includegraphics[width=0.95\linewidth]{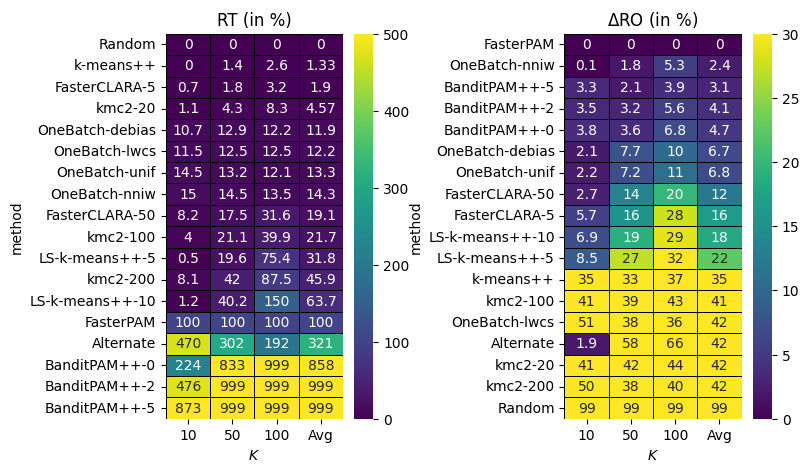} 
    \caption{\textbf{RT and $\Delta$RO for Drybean}}
    
\end{figure}

\begin{figure}[H]
    \centering
    \includegraphics[width=0.95\linewidth]{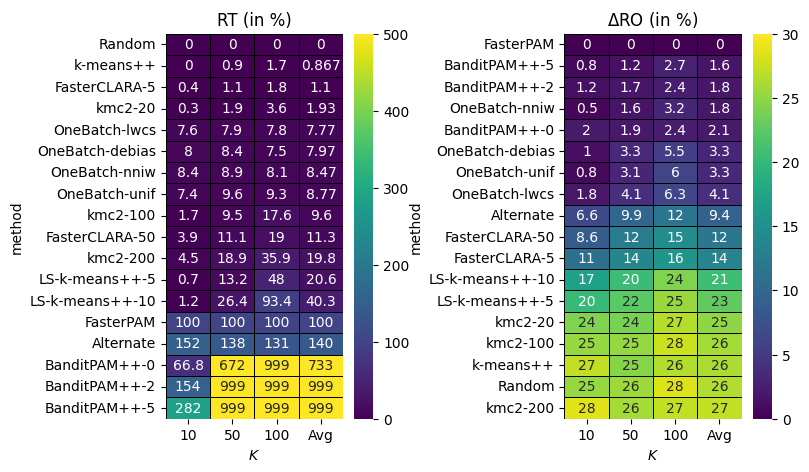} 
    \caption{\textbf{RT and $\Delta$RO for Letter}}
    
\end{figure}

\clearpage

\appsubsectiontitle{Detailed Results Large Scale}

\begin{table}[H]
\centering
\caption{\textbf{Relative Time (RT) per dataset for the ``large scale'' experiments}. The scores are averaged over the five repetitions of the experiment and the three values of $k \in [10, 50, 100]$. RT is given in percentage. The standard deviations are reported in brackets.}
\begin{tabular}{l|ccccc}
\toprule
\multirow{2}{*}{\backslashbox{Methods}{Datasets}} & cifar & covertype & dota2 & mnist & monitor-gas  \\
& & & & & \\
\midrule
Random & 0.0 (0.0) & 0.0 (0.0) & 0.0 (0.0) & 0.0 (0.0) & 0.0 (0.0) \\
FasterCLARA-5 & 19.8 (0.4) & 14.3 (0.0) & 12.4 (0.3) & 15.1 (1.0) & 13.4 (0.8) \\
FasterCLARA-50 & 193.7 (1.3) & 169.6 (0.2) & 144.8 (2.1) & 159.4 (3.6) & 140.9 (8.9) \\
kmc2-20 & 0.4 (0.0) & 0.1 (0.0) & 1.0 (0.0) & 0.4 (0.0) & 0.5 (0.0) \\
kmc2-100 & 2.1 (0.2) & 0.5 (0.0) & 5.0 (0.3) & 1.7 (0.3) & 2.4 (0.2) \\
kmc2-200 & 4.2 (0.2) & 1.4 (0.0) & 11.3 (0.9) & 3.3 (0.1) & 5.8 (0.3) \\
k-means++ & 21.0 (0.3) & 76.0 (2.3) & 17.8 (0.2) & 17.2 (1.3) & 261.9 (16.2) \\
LS-k-means++-5 & 24.8 (0.6) & 94.8 (4.1) & 53.7 (0.5) & 26.0 (1.2) & 286.3 (3.9) \\
LS-k-means++-10 & 29.1 (1.6) & 105.8 (5.8) & 88.9 (0.2) & 35.3 (1.5) & 348.7 (5.4) \\
OneBatch-lwcs & 100.4 (0.4) & 135.4 (9.7) & 101.6 (2.0) & 118.4 (22.7) & 135.0 (5.0) \\
OneBatch-unif & 116.9 (19.0) & 117.5 (12.6) & 95.2 (7.6) & 92.1 (2.2) & 99.1 (2.5) \\
OneBatch-debias & 109.9 (6.4) & 114.1 (12.8) & 81.0 (3.8) & 92.8 (2.1) & 102.3 (6.6) \\
OneBatch-nniw & 100.0 (0.3) & 100.0 (2.9) & 100.0 (6.1) & 100.0 (7.5) & 100.0 (3.8) \\
\bottomrule
\end{tabular}
\label{small-scale-rt-table}
\end{table}

\begin{table}[H]
\centering
\caption{\textbf{Delta Relative Objective ($\Delta$RO) per dataset for the ``large scale'' experiments}. The scores are averaged over the five repetitions of the experiment and the three values of $k \in [10, 50, 100]$. $\Delta$RO is given in percentage. The standard deviations are reported in brackets.}
\begin{tabular}{l|ccccc}
\toprule
\multirow{2}{*}{\backslashbox{Methods}{Datasets}} & cifar & covertype & dota2 & mnist & monitor-gas \\
& & & & & \\
\midrule
Random & 16.8 (1.4) & 24.9 (3.4) & 12.2 (1.9) & 15.5 (1.6) & 31.9 (3.5) \\
FasterCLARA-5 & 7.9 (0.6) & 9.7 (0.8) & 3.8 (0.3) & 8.1 (0.5) & 10.7 (1.0) \\
FasterCLARA-50 & 7.3 (0.5) & 8.5 (0.8) & 3.2 (0.1) & 7.1 (0.3) & 9.2 (0.6) \\
kmc2-20 & 18.1 (2.8) & 22.2 (3.5) & 8.9 (1.6) & 16.0 (1.8) & 25.8 (2.1) \\
kmc2-100 & 17.2 (1.5) & 22.0 (2.3) & 8.8 (2.2) & 15.5 (2.1) & 24.2 (3.1) \\
kmc2-200 & 19.2 (3.2) & 22.5 (3.2) & 8.6 (1.9) & 16.4 (2.2) & 26.1 (2.7) \\
k-means++ & 19.0 (2.5) & 21.7 (2.9) & 8.6 (2.4) & 16.3 (1.4) & 26.2 (4.4) \\
LS-k-means++-5 & 17.2 (2.4) & 17.7 (2.0) & 6.2 (0.9) & 13.7 (1.4) & 21.8 (2.5) \\
LS-k-means++-10 & 16.2 (1.9) & 15.8 (1.7) & 5.6 (0.6) & 12.6 (1.8) & 18.1 (2.7) \\
OneBatch-lwcs & -0.3 (0.2) & 3.8 (1.1) & 0.7 (0.3) & 0.8 (0.3) & 8.6 (1.3) \\
OneBatch-unif & 0.3 (0.2) & 1.7 (0.5) & 0.6 (0.2) & 0.9 (0.2) & 2.4 (0.9) \\
OneBatch-debias & -0.7 (0.1) & 1.9 (0.5) & 0.2 (0.1) & 0.6 (0.2) & 2.1 (0.6) \\
OneBatch-nniw & 0.0 (0.2) & 0.0 (0.2) & 0.0 (0.1) & 0.0 (0.2) & 0.0 (0.6) \\
\bottomrule
\end{tabular}
\end{table}

\begin{figure}[H]
    \centering
    \includegraphics[width=0.95\linewidth]{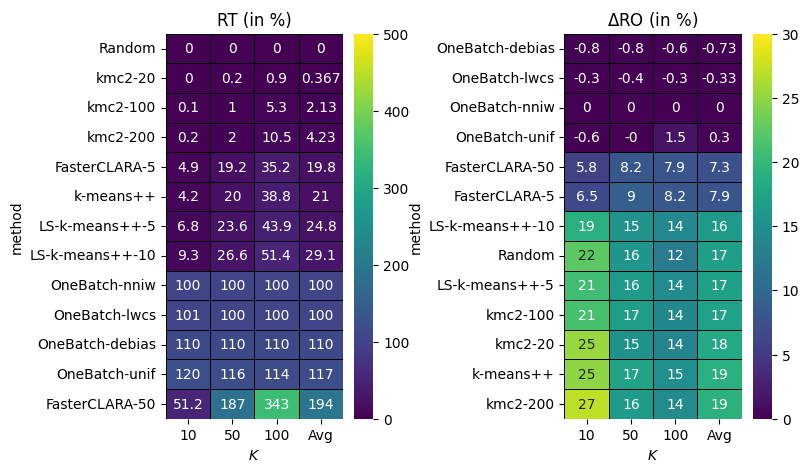} 
    \caption{\textbf{RT and $\Delta$RO for CIFAR}}
    
\end{figure}

\begin{figure}[H]
    \centering
    \includegraphics[width=0.95\linewidth]{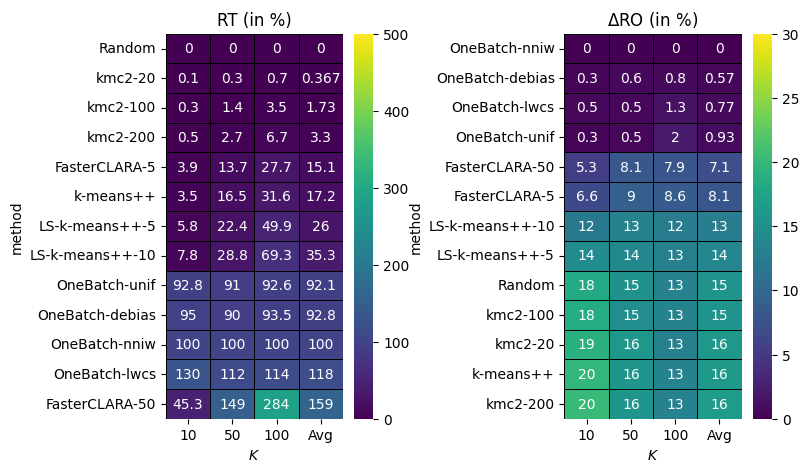} 
    \caption{\textbf{RT and $\Delta$RO for MNIST}}
    
\end{figure}

\begin{figure}[H]
    \centering
    \includegraphics[width=0.95\linewidth]{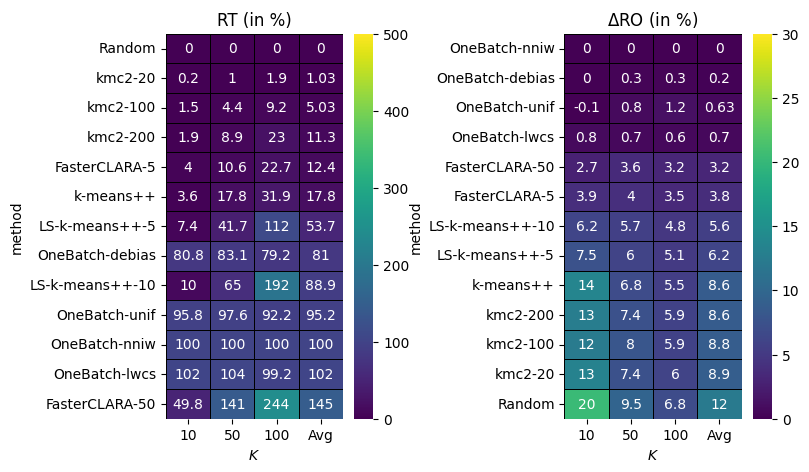} 
    \caption{\textbf{RT and $\Delta$RO for Dota2}}
    
\end{figure}

\begin{figure}[H]
    \centering
    \includegraphics[width=0.95\linewidth]{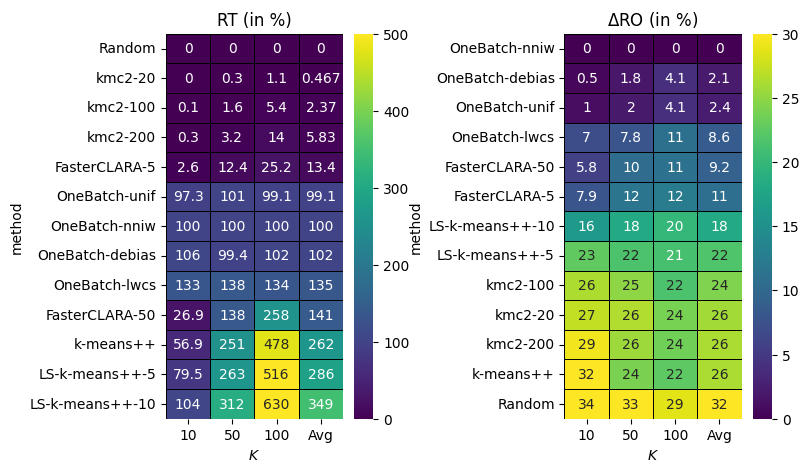} 
    \caption{\textbf{RT and $\Delta$RO for Monitor-gas}}
    
\end{figure}

\begin{figure}[H]
    \centering
    \includegraphics[width=0.95\linewidth]{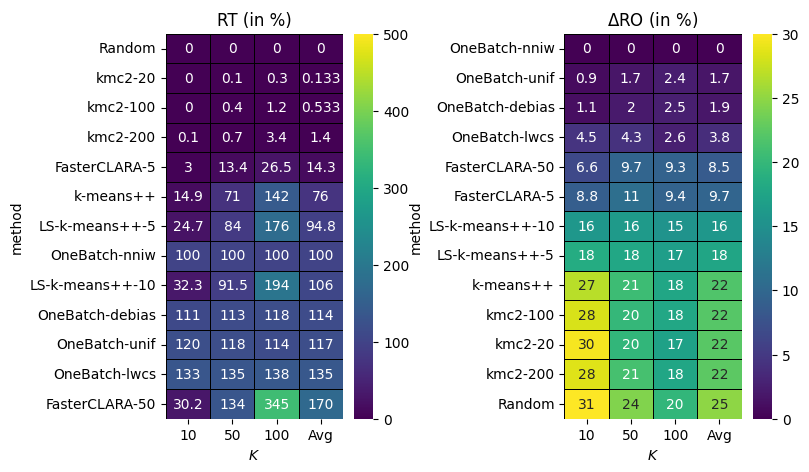} 
    \caption{\textbf{RT and $\Delta$RO for Covertype}}
    
\end{figure}

\clearpage

\appsectiontitle{Pareto Front}

This section presents the Pareto front (in red) for Objective vs Time graphs for each dataset and the two configurations $k=10$ and $k=100$. Algorithms belonging to the Pareto front are ``optimal'' for at least one objective/time trade-off. In contrast, the algorithms out of the Pareto front are ``suboptimal'' because another algorithm provides a better objective with less running time.

We observe that, for the small-scale datasets, $k$-means++, FasterCLARA-5, OneBatch-nniw and FasterPAM belong to the Pareto fronts. The Pareto fronts for the large-scale datasets include kmc2-20, FasterCLARA-5 and OneBatch-nniw.

\begin{figure}[!htb]
    \centering
        \centering
        \includegraphics[width=0.7\linewidth]{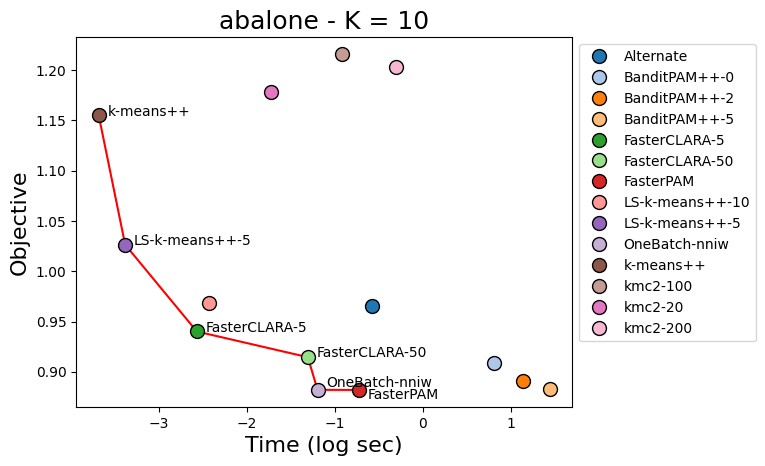} 
    \caption{\textbf{Objective vs Time: Abalone ($k=10$)}}
    
\end{figure}

\begin{figure}[!htb]
    \centering
        \centering
        \includegraphics[width=0.7\linewidth]{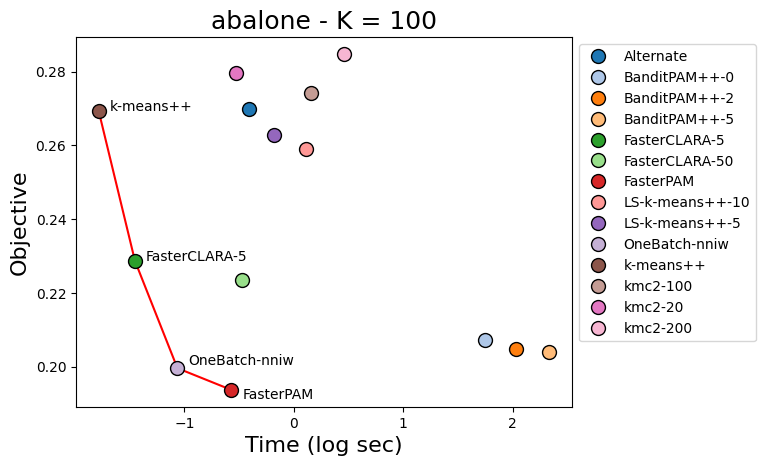} 
    \caption{\textbf{Objective vs Time: Abalone ($k=100$)}}
    
\end{figure}

\begin{figure}[!htb]
    \centering
        \centering
        \includegraphics[width=0.8\linewidth]{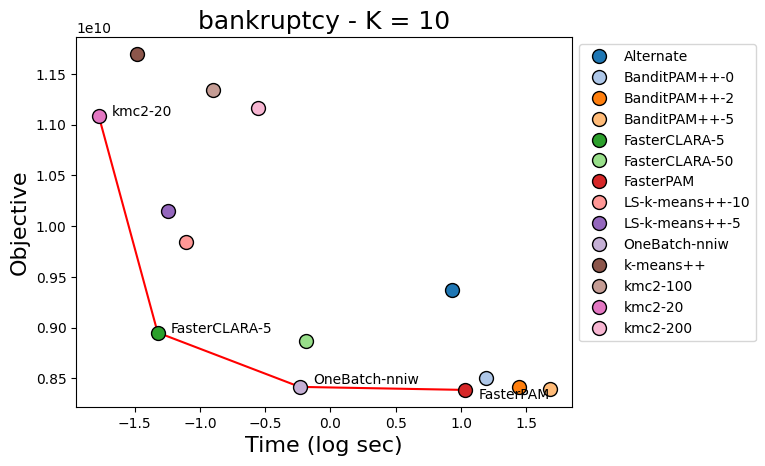} 
    \caption{\textbf{Objective vs Time: Bankruptcy ($k=10$)}}
    
\end{figure}

\begin{figure}[!htb]
    \centering
        \centering
        \includegraphics[width=0.8\linewidth]{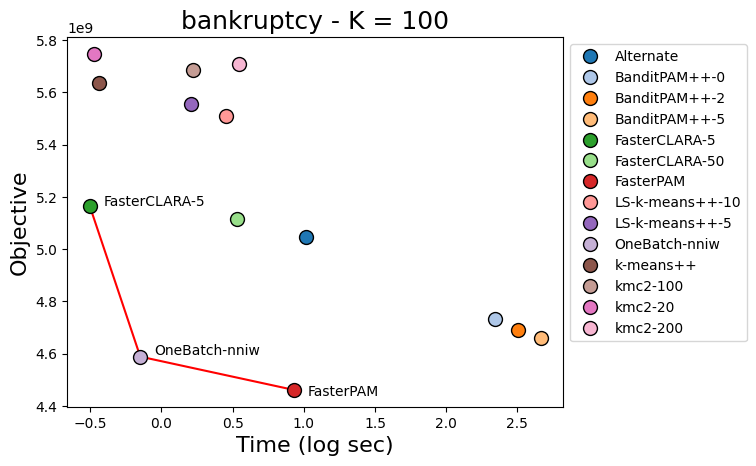} 
    \caption{\textbf{Objective vs Time: Bankruptcy ($k=100$)}}
    
\end{figure}

\begin{figure}[!htb]
    \centering
        \centering
        \includegraphics[width=0.8\linewidth]{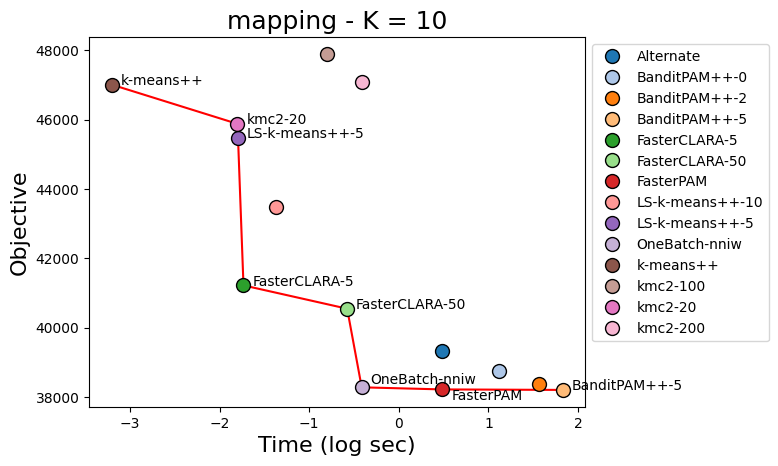} 
    \caption{\textbf{Objective vs Time: Mapping ($k=10$)}}
    
\end{figure}

\begin{figure}[!htb]
    \centering
        \centering
        \includegraphics[width=0.8\linewidth]{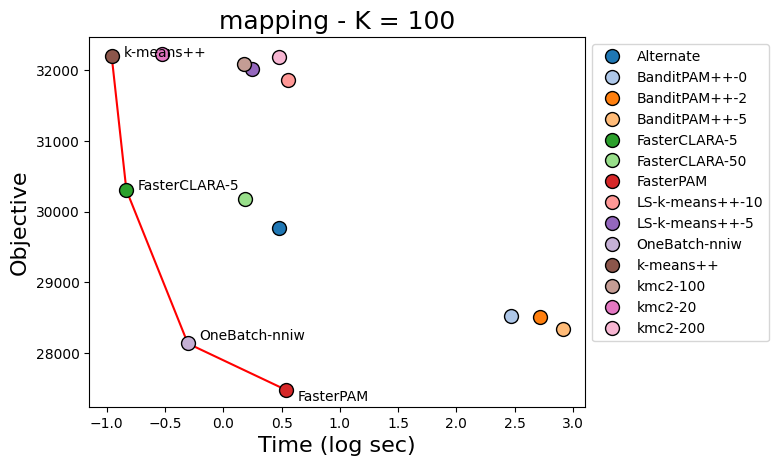} 
    \caption{\textbf{Objective vs Time: Mapping ($k=100$)}}
    
\end{figure}

\begin{figure}[!htb]
    \centering
        \centering
        \includegraphics[width=0.8\linewidth]{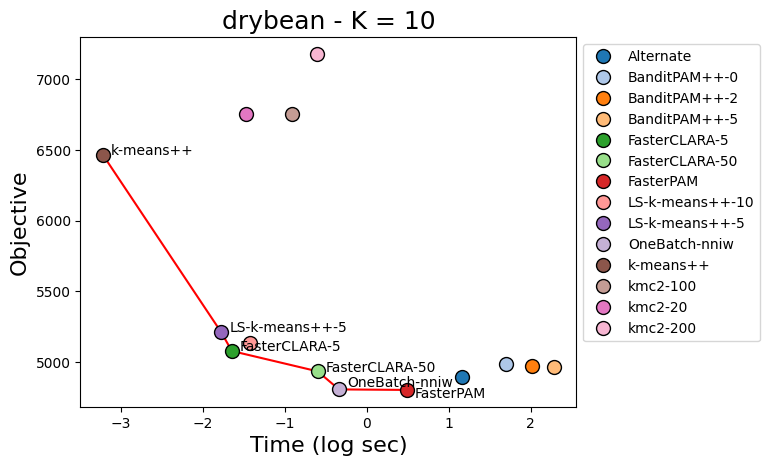} 
    \caption{\textbf{Objective vs Time: Drybean ($k=10$)}}
    
\end{figure}

\begin{figure}[!htb]
    \centering
        \centering
        \includegraphics[width=0.8\linewidth]{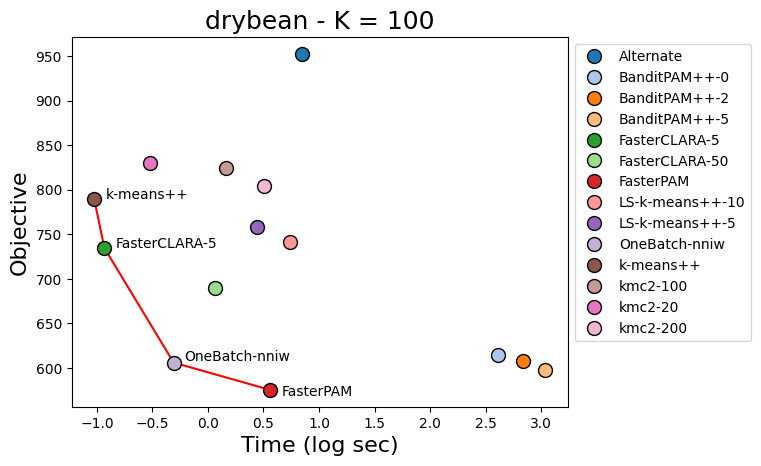} 
    \caption{\textbf{Objective vs Time: Drybean ($k=100$)}}
    
\end{figure}

\begin{figure}[!htb]
    \centering
        \centering
        \includegraphics[width=0.8\linewidth]{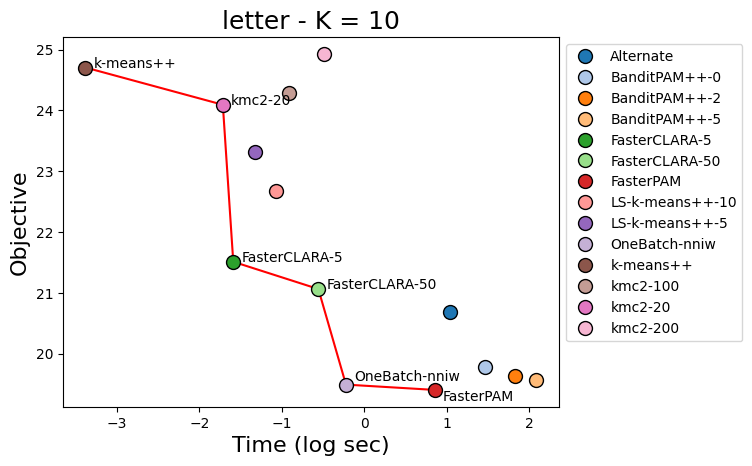} 
    \caption{\textbf{Objective vs Time: Letter ($k=10$)}}
    
\end{figure}

\begin{figure}[!htb]
    \centering
        \centering
        \includegraphics[width=0.8\linewidth]{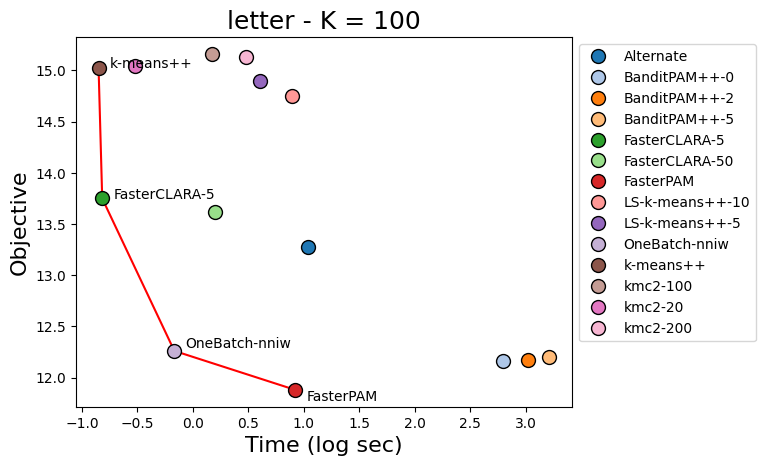} 
    \caption{\textbf{Objective vs Time: Letter ($k=100$)}}
    
\end{figure}

\begin{figure}[!htb]
    \centering
        \centering
        \includegraphics[width=0.8\linewidth]{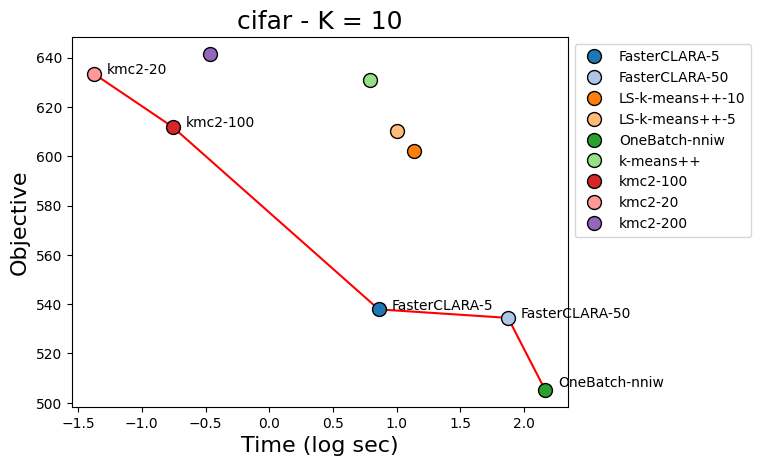} 
    \caption{\textbf{Objective vs Time: CIFAR ($k=10$)}}
    
\end{figure}

\begin{figure}[!htb]
    \centering
        \centering
        \includegraphics[width=0.8\linewidth]{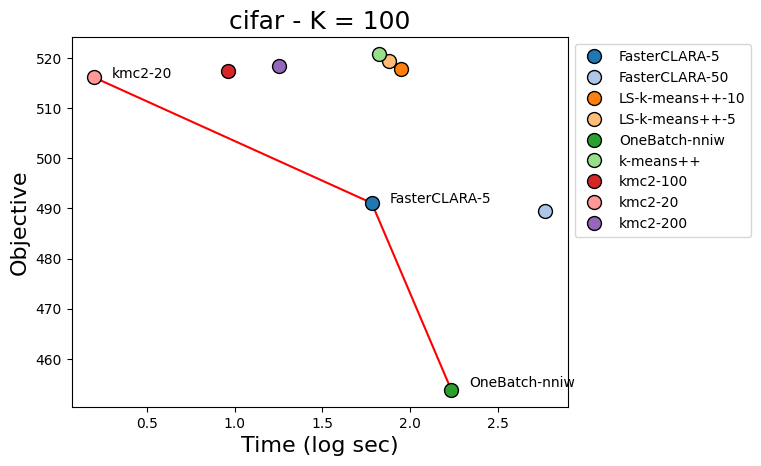} 
    \caption{\textbf{Objective vs Time: CIFAR ($k=100$)}}
    
\end{figure}

\begin{figure}[!htb]
    \centering
        \centering
        \includegraphics[width=0.8\linewidth]{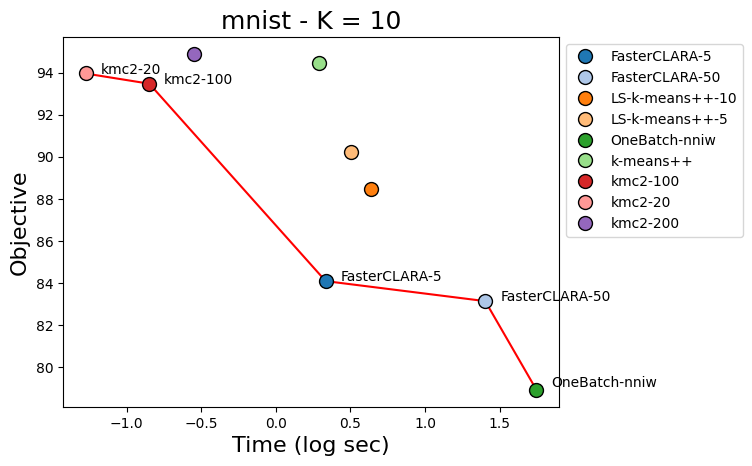} 
    \caption{\textbf{Objective vs Time: MNIST ($k=10$)}}
    
\end{figure}

\begin{figure}[!htb]
    \centering
        \centering
        \includegraphics[width=0.8\linewidth]{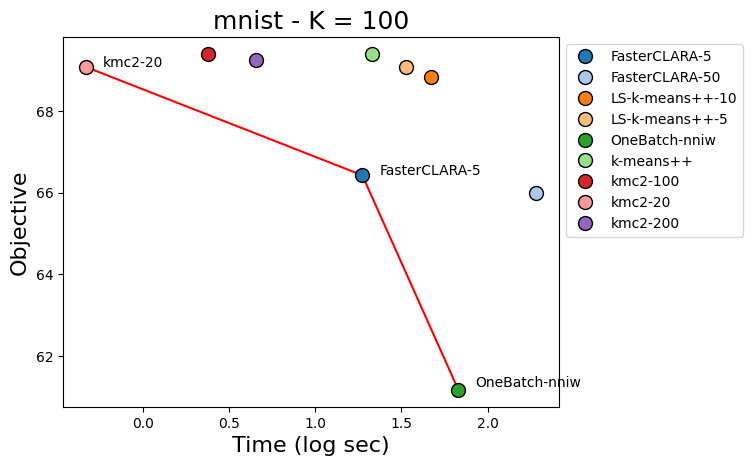} 
    \caption{\textbf{Objective vs Time: MNIST ($k=100$)}}
    
\end{figure}

\begin{figure}[!htb]
    \centering
        \centering
        \includegraphics[width=0.8\linewidth]{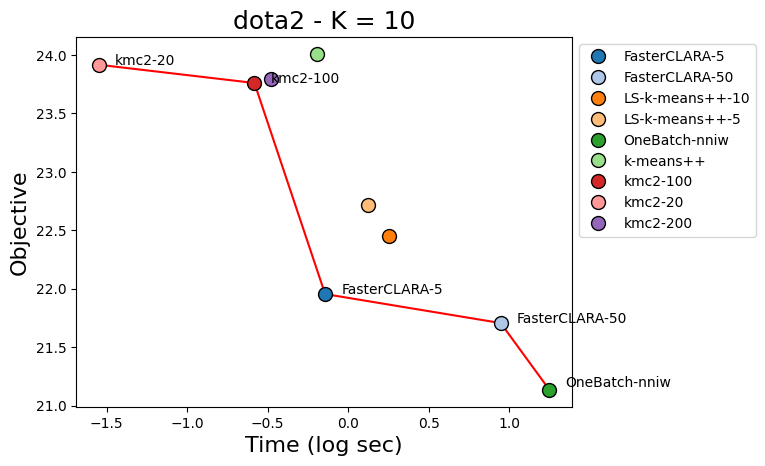} 
    \caption{\textbf{Objective vs Time: Dota2 ($k=10$)}}
    
\end{figure}

\begin{figure}[!htb]
    \centering
        \centering
        \includegraphics[width=0.8\linewidth]{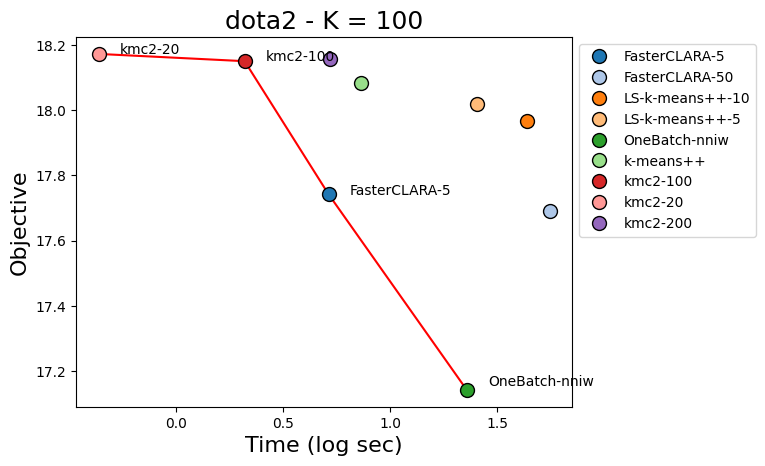} 
    \caption{\textbf{Objective vs Time: Dota2 ($k=100$)}}
    
\end{figure}

\begin{figure}[!htb]
    \centering
        \centering
        \includegraphics[width=0.8\linewidth]{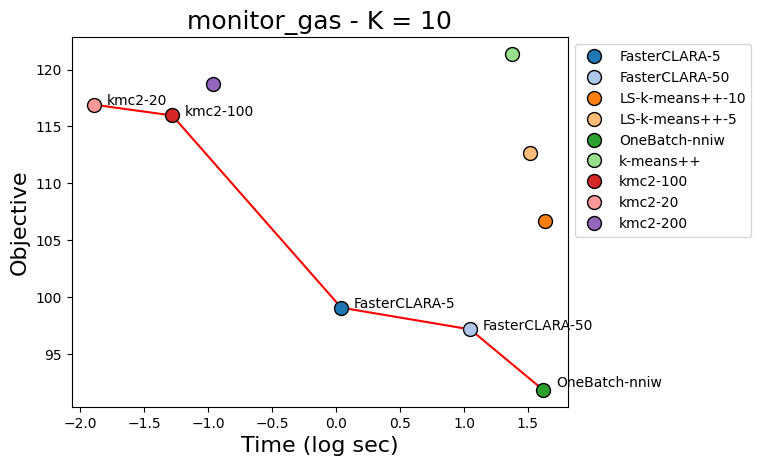} 
    \caption{\textbf{Objective vs Time: Monitor-gas ($k=10$)}}
    
\end{figure}

\begin{figure}[!htb]
    \centering
        \centering
        \includegraphics[width=0.8\linewidth]{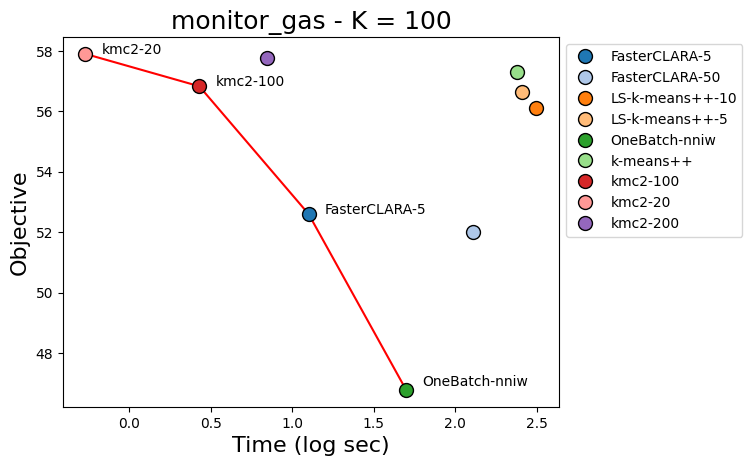} 
    \caption{\textbf{Objective vs Time: Monitor-gas ($k=100$)}}
    
\end{figure}

\begin{figure}[!htb]
    \centering
        \centering
        \includegraphics[width=0.8\linewidth]{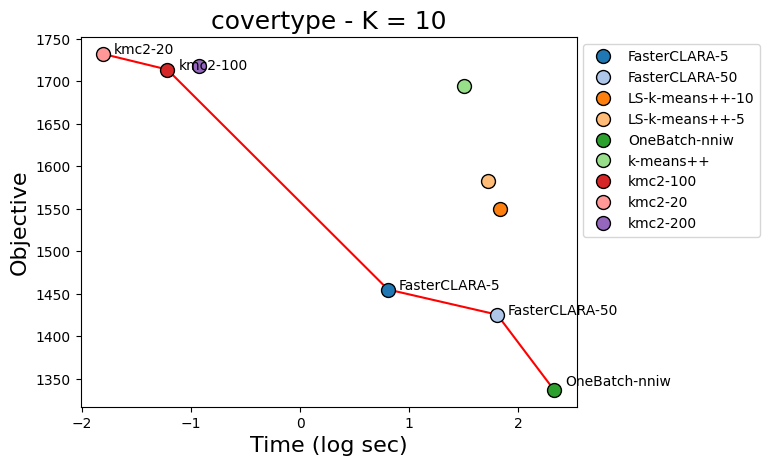} 
    \caption{\textbf{Objective vs Time: Covertype ($k=10$)}}
    
\end{figure}

\begin{figure}[!htb]
    \centering
        \centering
        \includegraphics[width=0.8\linewidth]{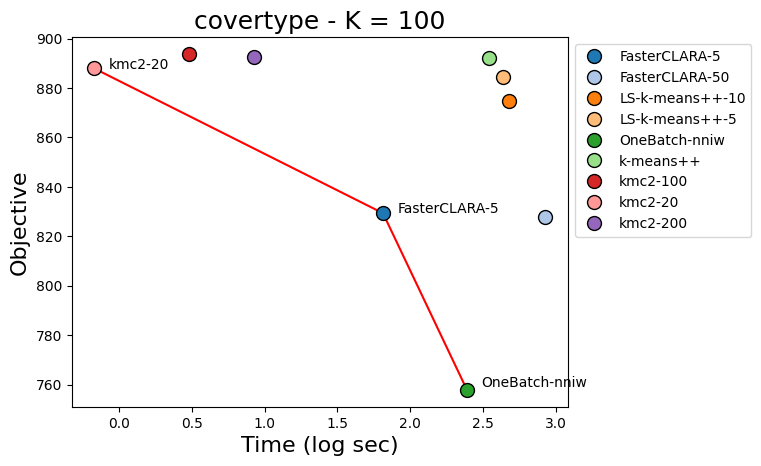} 
    \caption{\textbf{Objective vs Time: Covertype ($k=100$)}}
    
\end{figure}

\end{appendices}

\end{document}